\newcommand{\calD}{\mathcal{D}}
\newcommand{\calF}{\mathcal{F}}
\newcommand{\calG}{\mathcal{G}}
\newcommand{\calH}{\mathcal{H}}
\newcommand{\calK}{\mathcal{K}}
\newcommand{\calL}{\mathcal{L}}
\newcommand{\calR}{\mathcal{R}}
\newcommand{\calV}{\mathcal{V}}
\newcommand{\calW}{\mathcal{W}}
\newcommand{\calX}{\mathcal{X}}
\newcommand{\calY}{\mathcal{Y}}
\newcommand{\bA}{\mathbf{A}}
\newcommand{\bs}{\mathbf{s}}
\newcommand{\bw}{\mathbf{w}}
\newcommand{\bW}{\mathbf{W}}
\newcommand{\bx}{\mathbf{x}}
\newcommand{\bX}{\mathbf{X}}
\newcommand{\by}{\mathbf{y}}
\newcommand{\bY}{\mathbf{Y}}
\newcommand{\bbE}{\mathbb{E}}
\newcommand{\bbN}{\mathbb{N}}
\newcommand{\bbP}{\mathbb{P}}
\newcommand{\bbR}{\mathbb{R}}
\newcommand{\bbZ}{\mathbb{Z}}
\newcommand{\nn}{\nonumber} 
\newcommand{\bone}{\mathbf{1}}
\newcommand{\tilD}{\tilde{D}}
\newcommand{\hatE}{\hat{E}}
\DeclareMathOperator{\sgn}{sgn}
\newcommand{\eps}{\varepsilon}
\begin{document}

\title{On Rademacher Complexity-based 
	Generalization Bounds for Deep Learning}

\author{\name Lan V. Truong \email lan.truong@essex.ac.uk \\
       \addr School of Mathematics, Statistics and Actuarial Science\\
       University of Essex\\
       Colchester, CO4 3SQ, UK}
      
\editor{}

\maketitle
\begin{abstract}
We show that the Rademacher complexity-based framework can establish non-vacuous generalization bounds for Convolutional Neural Networks (CNNs) in the context of classifying a small set of image classes. A key technical advancement is the formulation of novel contraction lemmas for high-dimensional mappings between vector spaces, specifically designed for general Lipschitz activation functions. These lemmas extend and refine the Talagrand contraction lemma across a broader range of scenarios. Our Rademacher complexity bound provides an enhancement over the results presented by Golowich et al. for ReLU-based Deep Neural Networks (DNNs). Moreover, while previous works utilizing Rademacher complexity have primarily focused on ReLU DNNs, our results generalize to a wider class of activation functions.

\end{abstract}

\begin{keywords}
Deep Neural Networks, Convolutional Neural Networks, Deep Learning, Generalisation Error.
\end{keywords}

\section{Introduction}
Deep models are typically heavily over-parametrized, while they still achieve good generalization performance. Despite the widespread use of neural networks in biotechnology, finance, health science, and business, just to name a selected few, the problem of understanding deep learning theoretically remains relatively under-explored. In 2002, Koltchinskii and Panchenko \citep{Koltchinskii2002} proposed new probabilistic upper bounds on generalization error of the combination of many complex classifiers such as deep neural networks. These bounds were developed based on the general results of the theory of Gaussian, Rademacher, and empirical processes in terms of general functions of the margins, satisfying a Lipschitz condition. However, bounding Rademacher complexity for deep learning remains a challenging task. In this work, we present new upper bounds on the Rademacher complexity in deep learning, which differ from previous studies in how they depend on the norms of the weight matrices. Furthermore, we demonstrate that our bounds are non-vacuous for CNNs with a wide range of activation functions.
\subsection{Related Papers} 

The complexity-based generalization bounds were established by traditional learning theory aiming to provide general theoretical guarantees for deep learning. \citep{Goldberg1993}, \citep{Bartlett1996}, \citep{Bartlett1998A} proposed upper bounds based on the VC dimension for DNNs. \citep{Neyshabur2015} used Rademacher complexity to prove the bound with explicit exponential dependence on the network depth for ReLU networks. \citep{Neyshabur2018APA} and \citep{Bartlett2017} uses the PAC-Bayesian analysis and the covering number to obtain bounds with explicit polynomial dependence on the network depth, respectively. \citep{Golowich2018} provided bounds with explicit square-root dependence on the depth for DNNs with positive-homogeneous activations such as ReLU. 

The standard approach to develop generalization bounds on deep learning (and machine learning) was developed in seminar papers by \citep{Vap98}, and it is based on bounding the difference between the generalization error and the training error. These bounds are expressed in terms of the so called VC-dimension of the class. However, these bounds are very loose when the VC-dimension of the class can be very large, or even infinite. In 1998, several authors \citep{Bartlett1998,Bartlett1999} suggested another class of upper bounds on generalization error that are expressed in terms of the empirical distribution of the margin of the predictor (the classifier). Later, Koltchinskii and Panchenko \citep{Koltchinskii2002} proposed new probabilistic upper bounds on the generalization error of the combination of many complex classifiers such as deep neural networks. These bounds were developed based on the general results of the theory of Gaussian, Rademacher, and empirical processes in terms of general functions of the margins, satisfying a Lipschitz condition. They improved previously known bounds on generalization error of convex combination of classifiers. Generalization bounds for deep learning and kernel learning with Markov dataset based on Rademacher and Gaussian complexity functions have recently analysed in \citep{Truong2022BO}. Analysis of machine learning algorithms for Markov and Hidden Markov datasets already appeared in research literature \citep{Duchi2011ErgodicMD, Wang2019AML,Truong2022OnLM}. 

In the context of supervised classification, PAC-Bayesian bounds have been used to explain the generalisation capability of learning algorithms \citep{Langford2003,McAllester2004,Ambroladze2007}.  Several recent works have focused
on gradient descent based PAC-Bayesian algorithms,
aiming to minimise a generalisation bound for stochastic classifiers \citep{Dziugaite2017,Zhou2019, Biggs2021}. Most of these studies use a surrogate loss to avoid dealing with the zero-gradient of the misclassification loss. Several authors used other methods to estimate of the misclassification error with a non-zero gradient by proposing new training algorithms to evaluate the optimal output distribution in PAC-Bayesian bounds analytically \citep{McAllester1998,Eugenio2021a,Eugenio2021}.  Recently, \citep{NagarajanKolter2019} showed that uniform convergence might be unable to explain generalisation in deep learning by creating some examples where the test error is bounded by $\delta$  but the (two-sided) uniform convergence on this set of classifiers will yield only a vacuous generalisation guarantee larger than $1-\delta$ for some $\delta \in (0,1)$. This result is derived from evaluating the bounds presented in \citep{Neyshabur2018APA} and \citep{Bartlett2017}. 
There have been some interesting works which use information-theoretic approach to find PAC-bounds on generalization errors for machine learning \citep{XuRaginskyNIPS17, Esposito2021} and deep learning \citep{Jakubovitz2108}.

\subsection{Contributions} In this paper, our contributions are as follows:
\begin{itemize} \item We introduce novel contraction lemmas for high-dimensional mappings between vector spaces, which extend and enhance the Talagrand contraction lemma in various scenarios. \item We apply these new contraction lemmas to the layers of ReLU-based Deep Neural Networks (DNNs), demonstrating that our derived Rademacher complexity bound provides improvements over the results of Golowich et al. \citep{Golowich2018}. 
\item We empirically validate our theoretical findings on CNNs for MNIST image classification, showing that our generalization bounds are non-vacuous when the number of classes is small. 
\end{itemize}
As far as we know, this is the first result which shows that the Rademacher complexity-based approach can lead to non-vacuous generalisation bounds on CNNs. 
\subsection{Other Notations} Vectors and matrices are in boldface. 
For any vector $\bx=(x_1,x_2,\cdots,x_n) \in \bbR^n$ where $\bbR$ is the field of real numbers, its induced-$L^p$ norm is defined as
\begin{align}
\|\bx\|_p=\bigg(\sum_{k=1}^n |x_k|^p\bigg)^{1/p}. 
\end{align} 
The $j$-th component of the vector $\bx$ is denoted as $[\bx]_j$ for all $j \in [n]$. 

The Hadamard product between two vectors $\bx=(x_1,x_2,\cdots, x_n)$ and $\by=(y_1,y_2,\cdots, y_n)$ is defined as
\begin{align}
\bx \odot \by =(x_1 y_1, x_2y_2,\cdots, x_n y_n).
\end{align}

For $\bA \in \bbR^{m\times n}$ where
 \begin{align}
 \bA=\begin{bmatrix}a_{11},&a_{12},&\cdots,&a_{1n}\\ a_{21},&a_{22},&\cdots,&a_{2n}\\ \vdots& \vdots&\ddots& \vdots \\ a_{m1},& a_{m2},&\cdots, &a_{m n}\end{bmatrix}
 \end{align}
 we defined the induced-norm of matrix $\bA$ as
 \begin{align}
 \|\bA\|_{p,q}=\sup_{\bx \neq \b0} \frac{\|\bA \bx\|_q}{\|\bx\|_p}.
 \end{align}
 For abbreviation, we also use the following notation
 \begin{align}
 \|A\|_p&:= \|A\|_{p,p}.
 \end{align}
It is known that 
\begin{align}
\|\bA\|_1&=\max_{1\leq j \leq n} \sum_{i=1}^m |a_{ij}|,\\
\|\bA\|_2&=\sqrt{\lambda_{\max} \big(\bA \bA^T\big)},\\
\|\bA\|_{\infty}&=\max_{1\leq i \leq m} \sum_{j=1}^n |a_{ij}|,
\end{align} where $\lambda_{\max}(\bA \bA^T)$ is defined as the maximum eigenvalue of the matrix $\bA \bA^T$ (or the square of the maximum singular value of $\bA$).
\section{Contraction Lemmas in High Dimensional Vector Spaces}
First, we recall the Talagrand's contraction lemma.
\begin{lemma}\cite[Theorem 4.12]{LedouxT1991book} \label{lem:tala1} Let $\calH$ be a hypothesis set of functions mapping from some set $\calX$ to $\bbR$ and $\psi$ be a $\mu$-Lipschitz function from $\bbR \to \bbR$ for some $\mu>0$. Then, for any sample $S$ of $n$ points $\bx_1,\bx_2,\cdots,\bx_n \in \calX$, the following inequality holds:
	\begin{align}
	\bbE_{\boldsymbol{\eps}}\bigg[\sup_{h \in \calH} \bigg|\frac{1}{n} \sum_{i=1}^n\eps_i (\psi \circ h)(\bx_i)\bigg| \bigg]\leq  2\mu \bbE_{\boldsymbol{\eps}}\bigg[ \sup_{h \in \calH} \bigg|\frac{1}{n}\sum_{i=1}^n \eps_i h(\bx_i)\bigg|\bigg] \label{ato},
	\end{align}
\end{lemma}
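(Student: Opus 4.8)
The plan is to prove \eqref{ato} by the classical ``peeling'' induction, transforming one coordinate at a time, after reducing to the normalized case $\mu=1$. Since $\psi/\mu$ is $1$-Lipschitz and the supremum is positively homogeneous, $\sup_{h\in\calH}\sum_{i=1}^n\eps_i(\psi\circ h)(\bx_i)=\mu\,\sup_{h\in\calH}\sum_{i=1}^n\eps_i\big((\psi/\mu)\circ h\big)(\bx_i)$, so it suffices to establish the inequality when $\mu=1$ and then multiply through by $\mu$.

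The core of the argument is a single-coordinate lemma: for \emph{any} functional $U:\calH\to\bbR$, any point $\bx$, and a single Rademacher variable $\eps$,
\begin{align}
\bbE_{\eps}\Big[\sup_{h\in\calH}\big(U(h)+\eps\,(\psi\circ h)(\bx)\big)\Big]\leq \bbE_{\eps}\Big[\sup_{h\in\calH}\big(U(h)+\eps\, h(\bx)\big)\Big].
\end{align}
To see this, expand the left side over $\eps\in\{\pm1\}$ as $\tfrac12\big[\sup_h\big(U(h)+\psi(h(\bx))\big)+\sup_{h'}\big(U(h')-\psi(h'(\bx))\big)\big]$. Given $\delta>0$, choose $h,h'$ that are $\delta$-near-maximizers of the two suprema; the bracket is then at most $U(h)+U(h')+\psi(h(\bx))-\psi(h'(\bx))+2\delta$. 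Using $1$-Lipschitzness, $\psi(h(\bx))-\psi(h'(\bx))\leq |h(\bx)-h'(\bx)|=s\,(h(\bx)-h'(\bx))$ with $s:=\sgn(h(\bx)-h'(\bx))\in\{\pm1\}$, so the bracket is at most $\big(U(h)+s\,h(\bx)\big)+\big(U(h')-s\,h'(\bx)\big)+2\delta\leq \sup_{g}\big(U(g)+s\,g(\bx)\big)+\sup_{g'}\big(U(g')-s\,g'(\bx)\big)+2\delta$. By symmetry of $\eps$ this last sum equals $2\,\bbE_{\eps}\big[\sup_g(U(g)+\eps\,g(\bx))\big]$ regardless of the sign $s$, and letting $\delta\downarrow0$ yields the claim.

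Now iterate. At stage $k$, condition on $\{\eps_i\}_{i\neq k}$ and apply the single-coordinate lemma with the role of $\bx$ played by $\bx_k$ and $U(h):=\sum_{i=1}^{k-1}\eps_i h(\bx_i)+\sum_{i=k+1}^{n}\eps_i(\psi\circ h)(\bx_i)$; taking expectation over the remaining variables (Fubini) replaces $(\psi\circ h)(\bx_k)$ by $h(\bx_k)$ without decreasing the Rademacher average. After $n$ steps, $\bbE_{\boldsymbol{\eps}}[\sup_{h}\sum_{i}\eps_i(\psi\circ h)(\bx_i)]\leq \bbE_{\boldsymbol{\eps}}[\sup_{h}\sum_{i}\eps_i h(\bx_i)]$; dividing by $n$ and restoring the factor $\mu$ gives \eqref{ato}. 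The main obstacle is not depth but care: because $\calH$ may be uncountable one must use the $\delta$-near-maximizer device (or impose a separability/measurability hypothesis) to make the pointwise sign selection legitimate, and one must check that the ``frozen'' functional $U$ at stage $k$ indeed has the generic form required by the single-coordinate lemma so that it applies verbatim. No convexity or smoothness of $\psi$ beyond the Lipschitz bound is used.
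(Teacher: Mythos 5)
Your proposal is correct and follows essentially the same route as the source: the paper does not reprove this lemma but cites it from Mohri et al., whose argument is exactly your coordinate-by-coordinate peeling with $\delta$-near-maximizers, the sign choice $s=\sgn(h(\bx)-h'(\bx))$, and the symmetry $s\,\eps\sim\eps$. The same technique is also what the paper itself reuses in Appendix~\ref{lem:exttala:proof} to prove the extended Lemma~\ref{lem:exttalaa}, so there is nothing missing in your treatment beyond the measurability/near-maximizer caveat you already note.
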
 where $\boldsymbol{\eps}=(\eps_1,\eps_2,\cdots,\eps_n)$,  and $\{\eps_i\}_{i=1}^n$ is a sequence of i.i.d. Rademacher random variables (taking values $+1$ and $-1$ with probability $1/2$ each), independent of $\{\bx_i\}$. 

In Theorem \ref{aux_lem} and Theorem \ref{main:thm1} below, we present new versions of Talagrand's contraction lemma for the high-dimensional mapping $\psi$ between vector spaces. The proof of the these theorems are provided in Appendix \ref{aux_lem:proof} and Appendix \ref{proof:main:thm1}.

\begin{theorem} \label{aux_lem} Let $\calH$ be a set of functions mapping $\calX$ to $\bbR^m$ and $\calH_+=\calH \cup \{|h|: h \in \calH\}$ and $\psi: \bbR \to \bbR$ such that $\psi(x)= ReLU(x)-\alpha ReLU(-x) \enspace \forall x$ for some $\alpha \in [0,1]$. Then, for any $p\geq 1$ it holds that
\begin{align}
	&\bbE_{\boldsymbol{\eps}}\bigg[\sup_{h \in \calH} \bigg\| \frac{1}{n}\sum_{i=1}^n\eps_i \psi (h(\bx_i)) \bigg\|_p  \bigg] \leq  \bbE_{\boldsymbol{\eps}}\bigg[ \sup_{h \in \calH_+}\bigg\|\frac{1}{n}\sum_{i=1}^n \eps_i h(\bx_n)\bigg\|_p \bigg] \label{atonewa4}.
	\end{align} 
\end{theorem}
Identity, ReLU, Leaky ReLU, Parametric rectified linear unit (PReLU) belong to the class of functions $\calL:=\{\psi: \psi(x)=ReLU(x)-\alpha ReLU(-x) \enspace \forall x, \enspace \mbox{for some} \enspace \alpha \in \bbR\}$. 

\begin{theorem} \label{main:thm1}
Let $\calH$ be a set of functions mapping from some set $\calX$ to $\bbR^m$ for some $m \in \bbZ_+$ and
\begin{align}
\calL=\big\{\psi_{\alpha}: \psi_{\alpha}(x)= ReLU(x)-\alpha ReLU(-x) \enspace \forall x \in \bbR, \alpha \in [0,1]\big\} \label{defLa}
\end{align} where $ReLU(x)=\max(x,0)$. 

For any $\mu>0$, let  $\psi: \bbR \to \bbR$ be a $\mu$-Lipschitz function. Define
\begin{align}
\calH_+=\begin{cases} \calH \cup \{-h: h \in \calH\},\enspace \mbox{if} \enspace  \psi-\psi(0) \enspace \mbox{is odd}\\
\calH \cup \{-h: h \in \calH\}\cup \{|h|: h\in \calH\},\enspace \mbox{if} \enspace  \psi-\psi(0) \enspace \mbox{others}  \end{cases}.
\end{align}
	Then, it holds that
	\begin{align}
	&\bbE_{\boldsymbol{\eps}}\bigg[\sup_{h \in \calH} \bigg\| \frac{1}{n}\sum_{i=1}^n\eps_i \psi (h(\bx_i)) \bigg\|_{\infty}  \bigg]\nn\\
	&\qquad \leq \gamma(\mu) \bbE_{\boldsymbol{\eps}}\bigg[ \sup_{h \in \calH_+ }\bigg\|\frac{1}{n}\sum_{i=1}^n \eps_i h(\bx_i)\bigg\|_{\infty} \bigg]+\frac{1}{\sqrt{n}} \big|\psi(0)\big|, 
	\end{align} 
where
\begin{align}
\gamma(\mu)=\begin{cases} \mu, &\enspace \mbox{if} \enspace \psi-\psi(0) \enspace \mbox{is odd or belongs to $\calL$}\\  2\mu, &\enspace \mbox{if} \enspace \psi-\psi(0) \enspace \mbox{is even}\\  3\mu, &\enspace \mbox{if} \enspace \psi-\psi(0) \enspace \mbox{others} \end{cases}.
\end{align}	
Here, we define $\psi(\bx):=(\psi(x_1), \psi(x_2),\cdots, \psi(x_m))^T$ for any $\bx=(x_1,x_2,\cdots,x_m)^T \in \bbR^m$.
\end{theorem}
\begin{remark} Some remarks are in order. 
\begin{itemize}
\item Identity, ReLU, Leaky ReLU, Parametric rectified linear unit (PReLU) belong to the class of functions $\calL$. 
\item If $\psi$ is odd or belongs to $\calL$,  then $\psi(0)=0$. Therefore, Theorem \ref{main:thm1} improves Lemma \ref{lem:tala1} in the special case where $m=1$. This enhancement is achieved by leveraging the unique properties of certain function classes.
\item Our results are based on a novel approach, which shows that tighter contraction lemmas can be obtained when both the class of functions $\calH$ and the activation functions possess certain special properties.
 More specifically, in this work, we extend the class of functions $\calH$ by adding more functions, resulting in a new class $\calH_+$, which possesses certain special properties. Additionally, we restrict the class of activation functions to $\calL \cup \{\psi:\bbR \to \bbR: \psi(x)-\psi(0)=-(\psi(-x)-\psi(0)),\enspace \forall x \in \bbR\}$.
\end{itemize}
\end{remark}
Now, the following result can be easily proved (See Appendix~\ref{proof:lem:linear}).  
\begin{theorem}\label{lem:linear}
Let $\calG$ be a class of functions from $\bbR^{d_0} \to \bbR^{d_1}$ and $\calV$ be a class of matrices $\bW$ on $\bbR^{d_2 \times d_1}$ such that $\sup_{\bW \in \calV} \|\bW\|_p \leq \nu$ for some $p\geq 1$. Then, it holds that
\begin{align}
\bbE_{\boldsymbol{\eps}}\bigg[\sup_{\bW \in \calV}\sup_{f \in \calG} \bigg\|\frac{1}{n} \sum_{i=1}^n \eps_i \bW f(\bx_i) \bigg\|_p  \bigg] \leq \nu \bbE_{\boldsymbol{\eps}}\bigg[ \sup_{f \in \calG}\bigg\|\frac{1}{n}\sum_{i=1}^n \eps_i f(\bx_i)\bigg\|_p  \bigg] \label{alo2}. 
\end{align}   
\end{theorem} 
In addition, we can prove the following extensions of Talagrand's contraction lemma:
\begin{lemma} \label{mato1a}  Let $\psi:\bbR^L \to \bbR^K$ be an odd function such that  $\|\psi(\bx)-\psi(\by)\|_1 \leq \mu \|\bx-\by\|_1, \enspace \forall \bx, \by \in \bbR^L$ and $\calH:=\big\{h: \bbR^m \to \bbR^L\big\}$.   Let $\calH_+=\bigcup_{\sigma \in \{-1,+1\}^K} \calH_{\sigma} $ where
\begin{align}
\calH_{\sigma}=\big\{h_{\sigma}: h_{\sigma}(\bx)= (\sigma_1 h_1(\bx), \sigma_2 h_2(\bx),\cdots, \sigma_L h_L(\bx)\big\}.
\end{align}
Then, it holds that
\begin{align}
&\frac{1}{n}\bbE\bigg[\sup_{h \in \calH} \bigg\|\sum_{i=1}^n \eps_i \psi(h(\bx_i))\bigg\|_1 \bigg]\leq \frac{1}{n}\bbE\bigg[\sup_{h \in \calH_+} \bigg\|\sum_{i=1}^n \eps_i \psi(h(\bx_i))\bigg\|_1 \bigg]\nn\\
&\qquad \leq \frac{\mu}{n} \bbE\bigg[ \sup_{h \in \calH_+} \bigg\| \sum_{i=1}^n \eps_i h(\bx_i)\bigg\|_1\bigg] \label{A1}. 
\end{align}
\end{lemma}
\begin{proof}
See Appendix \ref{mato1a:proof}. 
\end{proof}
\begin{definition}  A function $f: \bbR^m \to \bbR$ is called even if and only if 
$$
f(\bx)=f(|\bx|) \qquad \forall \bx \in \bbR^m,
$$ where $|\bx|=(|x_1|,|x_2|,\cdots, |x_m|)^T$ if $\bx=(x_1,x_2,\cdots, x_m)$. 
\end{definition}
\begin{lemma} \label{mato1b}  Let $\psi:\bbR^L \to \bbR^K$ be an even function such that  $\|\psi(\bx)-\psi(\by)\|_1 \leq \mu \|\bx-\by\|_1, \enspace \forall \bx, \by \in \bbR^L$ and $\calH:=\big\{h: \bbR^m \to \bbR^L\big\}$.   Let $\calH_+=\bigcup_{\sigma \in \{-1,+1\}^K} \calH_{\sigma} $ where
\begin{align}
\calH_{\sigma}=\big\{h_{\sigma}: h_{\sigma}(\bx)= (\sigma_1 h_1(\bx), \sigma_2 h_2(\bx),\cdots, \sigma_L h_L(\bx)\big\}.
\end{align}  Then, it holds that
\begin{align}
\bbE\bigg[\sup_{h \in \calH} \frac{1}{n}\bigg\|\sum_{i=1}^n \eps_i \psi(h(\bx_i))\bigg\|_1 \bigg]\leq 2\mu  \bbE\bigg[ \sup_{h \in \calH_+} \frac{1}{n} \bigg\| \sum_{i=1}^n \eps_i h(\bx_i)\bigg\|_1\bigg] \label{A11}. 
\end{align}
\end{lemma}
\begin{proof} Since $\psi(x)$ is even, it holds that
\begin{align}
\bbE\bigg[\sup_{h \in \calH} \frac{1}{n}\bigg\|\sum_{i=1}^n \eps_i \psi(h(\bx_i))\bigg\|_1 \bigg]  =\bbE\bigg[\sup_{h \in \calH} \frac{1}{n}\bigg\|\sum_{i=1}^n \eps_i \psi(\big|h(\bx_i)\big| \big)\bigg\|_1 \bigg]  \label{anot1},
\end{align} 
Define
\begin{align}
\tilde{\psi}(\bx):=\psi\big(\bx \odot \bone\{\bx> 0 \}\big) - \psi\big(-\bx \odot \bone\{\bx< 0 \} \big)\qquad \forall \bx \in \bbR^L \label{A1},
\end{align} where we define $\bone\{\bx > 0\}$ and $\bone\{\bx < 0\}$ are vectors where for all $i \in [L]$ the $i$-th component is $1\{x_i > 0\}$ and $1\{x_i< 0\}$, respectively. Then, it is easy to see that $\tilde{\psi}$ is an odd function. 

On the other hand, we  also have
\begin{align}
\tilde{\psi}(|\bx|)= \psi(|\bx|), \qquad \forall \bx \in \bbR^L,
\end{align} so
\begin{align}
\bbE\bigg[\sup_{h \in \calH} \frac{1}{n}\bigg\|\sum_{i=1}^n \eps_i \psi(\big|h(\bx_i)\big| \big)\bigg\|_1 \bigg]=\bbE\bigg[\sup_{h \in \calH} \frac{1}{n}\bigg\|\sum_{i=1}^n \eps_i \tilde{\psi}(\big|h(\bx_i)\big| \big)\bigg\|_1 \bigg] \label{amu}. 
\end{align}

Furthermore, for all $\bx, \by \in \bbR^L$ we have
\begin{align}
&\big\|\tilde{\psi}(\bx)-\tilde{\psi}(\by)\big\|_1\nn\\
&\quad \leq \big\|\psi\big(\bx\odot \bone\{\bx> 0 \}\big)-\psi\big(\by\odot \bone\{\by> 0 \}\big)\big\|_1+ \big\|\psi\big(\bx\odot \bone\{\bx< 0 \}\big)-\psi\big(\by\odot \bone\{\by< 0 \}\big)\big\|_1\\
&\quad \leq \mu \big\|\bx\odot \bone\{\bx> 0 \}-\by\odot \bone\{\by > 0 \}\big\|_1+ \mu \big\|\bx\odot \bone\{\bx < 0 \}-\by\odot \bone\{\by < 0 \}\big\|_1 \label{Y1}
\end{align}
Now, observe that
\begin{align}
\big\|\bx\odot \bone\{\bx>0 \}-\by\odot \bone\{\by > 0 \}\big\|_1&=\sum_{i=1}^L \big|x_i \bone \{x_i > 0\}- y_i \bone\{y_i > 0\}\big|\\
&= \sum_{i=1}^L \bigg|\frac{x_i+|x_i|}{2}-\frac{ y_i+|y_i|}{2}\big|\\
&\leq \frac{1}{2} \sum_{i=1}^L |x_i-y_i|+ \frac{1}{2} \sum_{i=1}^L ||x_i|-|y_i||\\
&\leq \big\|\bx-\by\big\|_1 \label{Y2}
\end{align}
Similarly, we also have
\begin{align}
\big\|\bx \odot \bone\{\bx <0 \}-\by \odot \bone\{\by < 0 \}\big\|_1 \leq  \big\|\bx-\by\big\|_1 \label{Y3}.
\end{align}

From \eqref{Y1}, \eqref{Y2}, and \eqref{Y3} we obtain
\begin{align}
\big\|\tilde{\psi}(\bx)-\tilde{\psi}(\by)\big\|_1 \leq 2 \mu \big\|\bx-\by\big\|_1, \qquad \forall \bx, \by \in \bbR^L.
\end{align}

Hence, by Lemma \ref{mato1a} we have
\begin{align}
&\bbE\bigg[\sup_{h \in \calH} \frac{1}{n}\bigg\|\sum_{i=1}^n \eps_i \tilde{\psi}(\big|h(\bx_i)\big| \big)\bigg\|_1 \bigg]\nn\\
&\qquad \leq 2\mu \bbE\bigg[\sup_{h \in \calH_+} \frac{1}{n}\bigg\|\sum_{i=1}^n \eps_i \big|h(\bx_i)\big| \bigg\|_1 \bigg]\\
&\qquad= 2\mu \bbE\bigg[\sup_{h \in \calH_+} \frac{1}{n}\bigg\|\sum_{i=1}^n \eps_i h(\bx_i) \bigg\|_1 \bigg]
\label{amutab},
\end{align} where \eqref{amutab} follows by using the fact that $|h|\in \calH$ if $h \in \calH_+$. 

Hence, finally we have
\begin{align}
\bbE\bigg[\sup_{h \in \calH} \frac{1}{n}\bigg\|\sum_{i=1}^n \eps_i \psi(h(\bx_i))\bigg\|_1 \bigg] \leq 2\mu \bbE\bigg[\sup_{h \in \calH_+} \frac{1}{n}\bigg\|\sum_{i=1}^n \eps_i h(\bx_i) \bigg\|_1 \bigg] \label{amau}. 
\end{align} 
\end{proof}
\section{Rademacher Complexity Bounds  for Deep Neural Networks (DNNs)}
\subsection{General Deep Neural Network Models} \label{sec:dnm}
Let $d_0, d_1,\cdots,d_L ,d_{L+1}$ be a sequence of positive integer numbers such that $d_0=d$ for some fixed $d \in \bbZ_+$. 
We define a class of function $\calF$ as follows:
\begin{align}
\calF:=\big\{f=f_L\circ f_{L-1}\circ \cdots \circ f_1\circ f_0: f_i \in \calG_i \subset \{g_i: \bbR^{d_i} \to \bbR^{d_{i+1}}\}, \quad \forall i \in \{1,2,\cdots,L\} \big\},
\end{align}  where $f_0: [0,1]^d \to \bbR^{d_1}$ is a fixed function and $d_{L+1}=M$ for some $M\in \bbZ_+$. A Deep Neural Network (DNN) with network-depth $L$ is defined as a composition map $f \in \calF$ where
\begin{align}
f_i(\bx)=\sigma_i(\bW_i \bx), \quad \forall \bx \in \bbR^{d_i}.
\end{align} Here, $\bW_i \in  \calW_i$ where $\calW_i$ is a set of matrices in $\bbR^{d_{i+1} \times d_i}$, and $\sigma_i$ is a mapping from $\bbR^{d_{i+1}} \to \bbR^{d_{i+1}}$.
  
Given a function $f \in \calF$, a function $g \in \bbR^M \times [M]$ predicts a label $y \in [M]$ for an example $\bx \in \bbR^d$ if and only if 
\begin{align}
g(f(\bx),y)> \max_{y'\neq y} g(f(\bx),y')
\end{align} where $g(f(\bx),y)=\bw_y^T f(\bx)$ with $\bw_y=\underbrace{(0,0,\cdots,0,1,0,\cdots,0)}_{\bw_y(y)=1}$. 

For a training set $\{\bx_i\}_{i=1}^n$, the $\infty$-norm \emph{Rademacher complexity} for the class function $\calF$ is defined as
\begin{align}
\calR_n(\calF):=\bbE_{\boldsymbol{\eps}}\bigg[\sup_{f\in \calF} \bigg\|\frac{1}{n}\sum_{i=1}^n \eps_i f (\bx_i)\bigg\|_{\infty} \bigg] \label{defRM},
\end{align} 
where $\{\eps_i\}$ is a sequence of i.i.d. Rademacher random variables (taking values $+1$ and $-1$ with probability $1/2$ each), independent of $\{\bx_i\}$. 

The \emph{loss generalization gap} for the class function $\calF$ associated with a loss function $\psi$  is defined as
\begin{align}
L(\psi \circ \calF)=\sup_{f \in \calF} \bigg|\bbE_{(\bx,y) \sim \calD} \big[\psi(f(\bx),y)]-\frac{1}{n}\sum_{i=1}^n \psi(f(\bx_i),y_i)\bigg|. 
\end{align} 
Here,  $\{y_i\}_{i=1}^n$ is a sequence of labels, and $\calD$ is the distribution of $(\bx_i,y_i)$ for all $i \in [n]$.

For example,  for the square loss, i.e., $\psi(x,y)=(x-y)^2$, the loss generalization gap becomes
\begin{align}
L(\psi \circ \calF)=\sup_{f \in \calF} \bigg|\bbE_{(\bx,y) \sim \calD} \big[(f(\bx)-y)^2]-\frac{1}{n}\sum_{i=1}^n (f(\bx_i-y_i)^2 \bigg|. 
\end{align}

\subsection{Rademacher complexity bounds for ReLU-DNNs}

In this section, we aim to derive Rademacher complexity bounds for ReLU-based Deep Neural Networks (DNNs). We begin with the following result.
\begin{lemma} \label{lem:new}
	Let $\calH$ be a set of functions mapping $\calX$ to $\bbR^m$. Let $\b0$ be the zero-function on $\calX$, i.e., $\b0(\bx)=\b0$ for all $\bx \in \calX$.  Let $\calH_+=\calH \cup \{\b0\}$. 
	For any $\mu>0$, let $\Psi_{\mu}:=\{\tilde{\psi}: \calX \times \calY  \to \bbR: \big|\tilde{\psi}(x,y)-\tilde{\psi}(x',y)\big| \leq \mu |x-x'|, \enspace \forall (x, x' ,y) \in \calX \times \calX \times \calY \}$. Then, it holds that
	\begin{align}
	&\frac{1}{n}\bbE_{\boldsymbol{\eps}}\bigg[\sup_{h \in \calH} \bigg\| \sum_{i=1}^n\eps_i \tilde{\psi} (h(\bx_i),y_i) \bigg\|_{\infty}  \bigg]\nn\\
	&\qquad \leq \frac{c \mu}{n} \bbE_{\boldsymbol{\eps}}\bigg[ \sup_{h \in \calH_+ }\bigg\|\sum_{i=1}^n \eps_i h(\bx_i)\bigg\|_{\infty} \bigg]+\frac{1}{\sqrt{n}}\max_{y \in \calY}  \big|\tilde{\psi}(0,y)\big| 
	\end{align} for any fixed $\tilde{\psi} \in \Psi_{\mu}$. Here, we define $\tilde{\psi}(\bx):=(\tilde{\psi}(x_1), \tilde{\psi}(x_2),\cdots, \tilde{\psi}(x_m))^T$ for any $\bx=(x_1,x_2,\cdots,x_m)^T \in \bbR^m$.  In addition, $c=4$ if $m>1$ and $c=2$ if $m=1$. 	
\end{lemma}
\begin{proof}
	See Appendix \ref{lem:new:proof}.
\end{proof}

One such tool that can be used to upper bound the loss generalization gap is the Rademacher complexity. 
It is well known that (cf. \cite{ShalevShwartz2014UnderstandingML}) if the magnitude of our loss function is bounded above by $c$, with probability greater than $1-\delta$ for all $g \in \hat{\calF}$, we have
\begin{align}
\bigg|\bbE_{(\bx,y) \sim \calD} \big[\psi(g(\bx),y)]-\frac{1}{n}\sum_{i=1}^n \psi(g(\bx_i),y_i)\bigg| \leq 2 \calR_n(\psi \circ \hat{\calF})+4c \sqrt{\frac{2 \log \frac{4}{\delta}}{n}},
\end{align} where $\psi \circ \hat{\calF}=\{\psi(g(\bx),y): \bx, y \in \calX \times \calY, f \in \hat{\calF}\}$. Therefore, if we have an upper bound on the Rademacher complexity, we can have an upper bound on the loss generalization gap. 
\begin{lemma} \label{alem} If $\psi$ is the square loss, i.e., $\psi(x,y)=(x-y)^2$, for a DNN as an $M$-class classifier it holds that
\begin{align}
L(\psi \circ \hat{\calF})  \leq 4M \bbE_{\boldsymbol{\eps}}\bigg[ \sup_{g \in \hat{\calF}_+ }\bigg|\frac{1}{n}\sum_{i=1}^n \eps_i g(\bx_i)\bigg| \bigg]+ \frac{2M^2}{\sqrt{n}}+ 4M^2 \sqrt{\frac{2 \log \frac{4}{\delta}}{n}}, 
\end{align} where $\hat{\calF}_{+}=\hat{\calF} \cup \{0\}$, and $\hat{\calF}$ is the class of output functions of the DNN, i.e., $g=\|f\|_{\infty}$ for some $f \in \calF$ (see the modelling of ReLU-DNNs in Section \ref{sec:dnm}).   
\end{lemma}
\begin{proof} Observe that
\begin{align}
\calR_n(\psi\circ  \hat{\calF})= \bbE_{\boldsymbol{\eps}}\bigg[\sup_{g \in \hat{\calF}} \frac{1}{n}\bigg|\sum_{i=1}^n \eps_i \psi\big(g(\bx_i),y_i\big)\bigg|\bigg] \label{P1}. 
\end{align}
Now, we have
\begin{align}
\big|\psi(x,y)-\psi(x',y)\big|&=\big|(x-y)^2-(x'-y)^2\big|\\
&=\big|(x-x')(x+x'-2y)\big|\\
&=\big|x-x'\big|\big|x+x'-2y\big|\\
&\leq 2M \big|x-x'\big|. 
\end{align}
Hence, by applying Lemma \ref{lem:new} we have
\begin{align}
\bbE_{\boldsymbol{\eps}}\bigg[\sup_{g \in \hat{\calF}} \frac{1}{n}\bigg|\sum_{i=1}^n \eps_i \psi\big(g(\bx_i),y_i\big)\bigg|\bigg]\leq \frac{4M}{n} \bbE_{\boldsymbol{\eps}}\bigg[ \sup_{g \in \hat{\calF}_+ }\bigg|\sum_{i=1}^n \eps_i g(\bx_i)\bigg| \bigg]+\frac{2M^2}{\sqrt{n}}.  
\end{align}
\end{proof}
\begin{lemma} \label{max:lem} Assume that $\psi$ is the max activation function, i.e., $\psi(\bx)=\|\bx\|_{\infty}$ and $f:\bbR^K \to \bbR$.  Then, it holds that
\begin{align}
 \bbE_{\boldsymbol{\eps}}\bigg[ \sup_{f \in \calF}\bigg|\sum_{i=1}^n \eps_i \psi\circ f(\bx_i)\bigg| \bigg] \leq  2M\bbE_{\boldsymbol{\eps}}\bigg[ \sup_{f \in \calF_+}\bigg\|\sum_{i=1}^n \eps_i f(\bx_i)\bigg\|_{\infty} \bigg]. \end{align}
\end{lemma}
\begin{proof}
Observe that
\begin{align}
|\psi(\bx)-\psi(\by)|&=| \|\bx\|_{\infty}-\|\by\|_{\infty}|\\
&\leq \|\bx-\by\|_{\infty}\\
&\leq \|\bx-\by\|_1. 
\end{align} 
In addition, the function $\psi(\bx)$ is even since $\psi(|\bx|)=\psi(\bx)$. 
Hence, by Lemma \ref{mato1b}, we have
\begin{align}
 \bbE_{\boldsymbol{\eps}}\bigg[ \sup_{f \in \calF}\bigg|\sum_{i=1}^n \eps_i \psi\circ f(\bx_i)\bigg| \bigg]&\leq  2  \bbE_{\boldsymbol{\eps}}\bigg[ \sup_{f \in \calF}\bigg\|\sum_{i=1}^n \eps_i  f(\bx_i)\bigg\|_1 \bigg]\\
 &\leq 2M \bbE_{\boldsymbol{\eps}}\bigg[ \sup_{f \in \calF}\bigg\|\sum_{i=1}^n \eps_i  f(\bx_i)\bigg\|_{\infty} \bigg]. 
\end{align}
\end{proof}

The following Rademacher complexity bounds for ReLU-Dense Layers is a direct application of Theorem \ref{aux_lem} and Theorem \ref{lem:linear}. 
\begin{lemma} \label{softmax:lem} Assume that $\psi$ is the softmax activation function.  Then, it holds that
\begin{align}
 \bbE_{\boldsymbol{\eps}}\bigg[ \sup_{f \in \calF}\bigg\|\sum_{i=1}^n \eps_i \psi\circ f(\bx_i)\bigg\|_{\infty} \bigg] \leq  \bbE_{\boldsymbol{\eps}}\bigg[ \sup_{f \in \calF_+}\bigg\|\sum_{i=1}^n \eps_i f(\bx_i)\bigg\|_{\infty} \bigg]. \end{align}
\end{lemma}
\begin{proof}
This is a direct application of Lemma \ref{lem:new} with noting that
\begin{align}
\tilde{\psi}(x,y):=\frac{e^{x}}{e^x+y}
\end{align} has Lipschitz constant being equal to $1/4$ for each fixed $y$. 
\end{proof}

\begin{lemma}[ReLU-Dense Layers] \label{goodlemt} Recall the definition of $\calL$ in \eqref{defLa}. Let $\calV$ be a class of matrices $\bW$ on $\bbR^{d \times d'}$ such that $\sup_{\bW \in \calV} \|\bW\|_p \leq \beta$. For any vector $\bx=(x_1,x_2,\cdots,x_{d'})$, we denote by $\sigma(\bx):=(\sigma(x_1),\sigma(x_2),\cdots,\sigma(x_{d'}))^T$ where $\sigma\in \calL$.  Then, it holds that
\begin{align}
\bbE_{\boldsymbol{\eps}}\bigg[\sup_{\bW \in \calV}\sup_{f \in \calG} \bigg\|\frac{1}{n} \sum_{i=1}^n \eps_i \sigma(\bW f(\bx_i))\bigg\|_p  \bigg] 
\leq  \beta \bbE_{\boldsymbol{\eps}}\bigg[ \sup_{f \in \calG}\bigg\|\frac{1}{n}\sum_{i=1}^n \eps_i f(\bx_i)\bigg\|_p \bigg].
\end{align} 
\end{lemma}
Then, we can show the following result.

\begin{theorem} \label{lem:survive0} 
Let
\begin{align}
\calL=\big\{\psi_{\alpha}: \psi_{\alpha}(x)= ReLU(x)-\alpha ReLU(-x) \enspace \forall x \in \bbR, \alpha \in [0,1]\big\} \label{defLL}.
\end{align}
 Consider the DNN defined in Section \ref{sec:dnm} where $$[f_i(\bx)]_j=\sigma_i\big(\bw_{j,i}^T f_{i-1}(\bx)\big) \enspace \forall j \in [d_{i+1}]$$ and $\sigma_i \in \calL$. 
In addition, $f_0(\bx)=[\bx^T,1]^T, \enspace \forall \bx \in \bbR^d$ and $\bx$ is normalised such that $\|\bx\|_{\infty}\leq 1$.
Then, the Rademacher complexity, $\calR_n(\calF)$, satisfies 
\begin{align}
\calR_n(\calF) \leq   \inf_{p\geq 1} \inf_{L' \in [1,L]} 2\bigg(\frac{d+1}{n}\bigg)^{1/p} \bigg(\prod_{l=L'+1}^L  \|\bW_l\|_{\infty} \bigg) \bigg(\prod_{l=1}^{L'} \|\bW_l\|_p\bigg)  \label{defrademacherReLU}.
\end{align}
\begin{remark} By choosing $L'=L$, it holds from \eqref{defrademacherReLU} that
\begin{align}
\calR_n(\calF) \leq \inf_{p\geq 1} 2\bigg(\frac{d+1}{n}\bigg)^{1/p} \bigg(\prod_{l=1}^L  \|\bW_l\|_{\infty} \bigg),
\end{align} which improves Golowich et al.'s bound (cf. Section \ref{comp:sec}). 
\end{remark}
\end{theorem}
\begin{proof}
This is a direct application of Lemma \ref{goodlemt}.  
By the modelling of ReLU-DNNs in Section \ref{sec:dnm}, it holds that
\begin{align}
\calF_k:=\big\{f=f_k\circ f_{k-1}\circ \cdots \circ f_1\circ f_0: f_i \in \calG_i \subset \{g_i: \bbR^{d_i} \to \bbR^{d_{i+1}}\}, \quad \forall i \in \{1,2,\cdots,k\} \big\}
\end{align} 
and $\calF:=\calF_L$. 

For ReLU-DNNs, $f_l(\bx)=\sigma_l(W_l \bx))$ for all $l \in [L]$ where $W_l \in \calW_l$ (a set of matrices) and $\sigma_l \in \calL$.

Then, since $|\sigma_l|, -\sigma_l \in \Psi_l$, it is easy to see that
\begin{align}
\calF_{l,+} \subset \Psi_l (\calW_l  \calF_{l-1,+}), \qquad \forall l \in [L],
\end{align} where $\calF_{l,+}:=\calF_l  \cup \{|f|: f\in \calF_l\}$ is a supplement of $\calF_l$. 

On the other hand, by using Lemma \ref{goodlemt} and peeling layer by layer we have
\begin{align}
&\bbE_{\boldsymbol{\eps}}\bigg[\sup_{f \in \calF_{L,+}}\bigg\|\frac{1}{n}\sum_{i=1}^n \eps_i f(\bx_i)\bigg\|_\infty  \bigg] \nn\\
&\qquad  \leq \inf_{L' \in [1,L]} \bigg(\prod_{l=L'+1}^L  \|\bW_l\|_{\infty} \bigg) \bigg(\prod_{l=1}^{L'} \|\bW_l\|_p\bigg)  \bbE_{\boldsymbol{\eps}}\bigg[\sup_{f \in \calH_+}\bigg\|\frac{1}{n}\sum_{i=1}^n \eps_i f(\bx_i)\bigg\|_p \bigg] \label{hk1},
 \end{align}
where $\calH_+$ is the extended set of inputs to the CNN, i.e.,
\begin{align}
\calH_+=f_0 \cup \{|f_0|\}.
\end{align}
Now, observe that
\begin{align}
 &\bbE_{\boldsymbol{\eps}}\bigg[\sup_{f \in \calH_+}\bigg\|\frac{1}{n}\sum_{i=1}^n \eps_i f(\bx_i)\bigg\|_p \bigg]\nn\\
 &\qquad \leq  \bbE_{\boldsymbol{\eps}}\bigg[\max\bigg\{\bigg\|\frac{1}{n}\sum_{i=1}^n \eps_i f_0(\bx_i)\bigg\|_p,  \bigg\|\frac{1}{n}\sum_{i=1}^n \eps_i \big|f_0(\bx_i)\big| \bigg\|_p\bigg\}\bigg]\\
 &\qquad \leq   \bbE_{\boldsymbol{\eps}}\bigg[\bigg\|\frac{1}{n}\sum_{i=1}^n \eps_i f_0(\bx_i)\bigg\|_p\bigg]+ \bbE_{\boldsymbol{\eps}}\bigg[ \bigg\|\frac{1}{n}\sum_{i=1}^n \eps_i \big|f_0(\bx_i)\big| \bigg\|_p \bigg] \label{k1}. 
\end{align}
Furthermore, we have
\begin{align}
\bigg\|\frac{1}{n}\sum_{i=1}^n \eps_i f_0(\bx_i)\bigg\|_{\infty} \leq \frac{1}{n}\sum_{i=1}^n \|f_0(\bx_i)\|_{\infty} \leq 1. 
\end{align}
Hence, we have
\begin{align}
\bbE_{\boldsymbol{\eps}}\bigg[\bigg\|\frac{1}{n}\sum_{i=1}^n \eps_i f_0(\bx_i)\bigg\|_p\bigg] &\leq \bigg(\bbE_{\boldsymbol{\eps}}\bigg[\bigg\|\frac{1}{n}\sum_{i=1}^n \eps_i f_0(\bx_i)\bigg\|_p^p\bigg]\bigg)^{1/p}\\
&\leq \bigg(\bbE_{\boldsymbol{\eps}}\bigg[\bigg\|\frac{1}{n}\sum_{i=1}^n \eps_i f_0(\bx_i)\bigg\|_2^2 \bigg]\bigg)^{1/p} \label{k10}\\
&\leq \bigg(\frac{d+1}{n}\bigg)^{1/p} \label{k10b}, 
\end{align} where \eqref{k10} follows from the fact that $\|\bx\|_p^p \leq \|\bx\|_{\infty}^{p-2} \|\bx\|_2^2$ for any vector $\bx$. 

Similarly, we also have
\begin{align}
\bbE_{\boldsymbol{\eps}}\bigg[\bigg\|\frac{1}{n}\sum_{i=1}^n \eps_i \big|f_0(\bx_i)\big|\bigg\|_p\bigg] \leq \bigg(\frac{d+1}{n}\bigg)^{1/p} \label{k11}. 
\end{align}

By combining \eqref{hk1}, \eqref{k10b}, and \eqref{k11} we obtain
\begin{align}
\bbE_{\boldsymbol{\eps}}\bigg[\sup_{f \in \calF_{L,+}}\bigg\|\frac{1}{n}\sum_{i=1}^n \eps_i f(\bx_i)\bigg\|_p  \bigg] \leq \inf_{p\geq 1} \inf_{L' \in [1,L]} 2\bigg(\frac{d+1}{n}\bigg)^{1/p} \bigg(\prod_{l=L'+1}^L  \|\bW_l\|_{\infty} \bigg) \bigg(\prod_{l=1}^{L'} \|\bW_l\|_p\bigg) \label{pig}.  
\end{align}
\end{proof}


\subsection{Rademacher complexity bounds for CNNs}
\subsubsection{Some Contraction Lemmas for CNNs}
Based on Theorem \ref{main:thm1} and Theorem \ref{lem:linear}, the following versions of Talagrand's contraction lemma for different layers of CNN are derived. 

\begin{definition}[Convolutional Layer with Average Pooling] \label{defconvlavg}
Let $\calG$ be a class of $\mu$-Lipschitz function $\sigma$ from $\bbR \to \bbR$ such that $\sigma(0)$ is fixed. Let $C, Q \in \bbZ_+$, $\{r_l,\tau_l\}_{l \in [Q]}$ be two tuples of positive integer numbers, and  $\{W_{l,c}  \in \bbR^{r_l \times r_l}, c \in [C], l \in [Q]\}$ be a set of kernel matrices. 
A convolutional layer with average pooling, $C$ input channels, and $Q$ output channels is defined as a set of  $Q \times C$ mappings $\Psi=\{\psi_{l,c}, l \in [Q], c\in [C]\}$ from $\bbR^{d \times d} $ to $\bbR^{\lceil (d-r_l+1)/\tau_l\rceil\times\lceil (d-r_l+1)/\tau_l\rceil }$
such that
\begin{align}
\psi_{l,c}(\bx)=\sigma_{\rm{avg}}\circ \sigma_{l,c}(\bx),
\end{align}
where
\begin{align}
\sigma_{\rm{avg}}(\bx)&=\frac{1}{\tau_l^2}\bigg(\sum_{k=1}^{\tau_l^2} x_k, \cdots, \sum_{k=(j-1)\tau_l^2+1 }^{j\tau_l^2} x_k, \cdots, \sum_{k=\lceil (d-r_l+1)^2/\tau_l^2\rceil-r_l^2+1}^{ \lceil (d-r_l+1)^2/\tau_l^2\rceil \tau_l^2} x_k\bigg), \nn\\
&\qquad \qquad \qquad \forall \bx \in \bbR^{ \lceil (d-r_l+1)^2/\tau_l^2\rceil \tau_l^2},
\end{align}
and for all $\bx \in \bbR^{d\times d \times C}$, 
\begin{align}
\sigma_{l,c}(\bx)&=\{\hat{x}_c(a,b)\}_{a,b=1}^{d-r_l+1},\\
\hat{x}_c(a,b)&=\sigma\bigg(\sum_{u=0}^{r_l-1}\sum_{v=0}^{r_l-1} x(a+u,b+v,c) W_{l,c}(u+1,v+1)\bigg). 
\end{align}
\end{definition}

\begin{lemma} [Convolutional Layer with Average Pooling] \label{goodlem0inftyto1} 
Let $\calF$ be a set of functions mapping from some set $\calX$ to $\bbR^m$ for some $m \in \bbZ_+$. Consider a convolutional layer with average pooling defined in Definition \ref{defconvlavg}. Recall the definition of $\calL$ in \eqref{defLa}. 
Then, it hold that
\begin{align}
&\bbE_{\boldsymbol{\eps}}\bigg[\sup_{c \in [C]} \sup_{l \in [Q]}\sup_{\psi_l \in \Psi} \sup_{f \in \calF} \bigg\|\frac{1}{n}\sum_{i=1}^n \eps_i \psi_{l,c} \circ f(\bx_i)\bigg\|_{\infty}   \bigg]\nn\\
&\quad \leq \bigg[\gamma(\mu) \sup_{c \in [C]} \sup_{l \in [Q]}\bigg(\sum_{u=0}^{r_l-1}\sum_{v=0}^{r_l-1}  \big|W_{l,c}(u+1,v+1)  \big|\bigg) \bigg]   \bbE\bigg[ \sup_{f \in \calF_+} \bigg\|\frac{1}{n}\sum_{i=1}^n \eps_i f(\bx_i)\bigg\|_{\infty} \bigg]+\frac{|\sigma(0)|}{\sqrt{n}},
\end{align}
where
\begin{align}
\gamma(\mu)=\begin{cases} \mu, &\enspace \mbox{if} \enspace \sigma-\sigma(0) \enspace \mbox{is odd or belongs to $\calL$}\\  2\mu, &\enspace \mbox{if} \enspace \sigma-\sigma(0) \enspace \mbox{is even}\\  3\mu, &\enspace \mbox{if} \enspace \sigma-\sigma(0) \enspace \mbox{others} \end{cases}.
\end{align}
Here,
\begin{align}
\calF_+=\begin{cases} \calF \cup \{-f: f \in \calF\},\enspace \mbox{if} \enspace  \sigma-\sigma(0) \enspace \mbox{is odd}\\
\calF \cup \{-f: f \in \calF\}\cup \{|f|: f\in \calF\},\enspace \mbox{if} \enspace  \sigma-\sigma(0) \enspace \mbox{others}  \end{cases} \label{defFplus}.
\end{align}
\end{lemma} 

For Dropout layer, the following holds:
\begin{lemma}[Dropout Layers]\label{dropoutlem} Let $\psi(\bx)$ is the output of the $\bx$ via the Dropout layer.  Then, it holds that
\begin{align}
\bbE_{\boldsymbol{\eps}}\bigg[\sup_{f \in \calH}\bigg\| \frac{1}{n}\sum_{i=1}^n  \eps_i \psi\circ f(\bx_i)\bigg\|_{\infty}\bigg]\leq \bbE\bigg[\sup_{f \in \calH}\bigg\| \frac{1}{n}\sum_{i=1}^n \eps_i f(\bx_i)\bigg\|_{\infty}\bigg]. 
\end{align}
\end{lemma}

The following Rademacher complexity bounds for Dense Layers.

\begin{lemma}[Dense Layers] \label{goodlem} Recall the definition of $\calL$ in \eqref{defLa}. Let $\calG$ be a class of  $\mu$-Lipschitz function, i.e.,
\begin{align}
\big|\sigma(x)-\sigma(y)\big| \leq \mu |x-y|, \qquad \forall x, y \in \bbR,
\end{align} such that $\sigma(0)$ is fixed. 
Let $\calV$ be a class of matrices $\bW$ on $\bbR^{d \times d'}$ such that $\sup_{\bW \in \calV} \|\bW\|_{\infty} \leq \beta$. For any vector $\bx=(x_1,x_2,\cdots,x_{d'})$, we denote by $\sigma(\bx):=(\sigma(x_1),\sigma(x_2),\cdots,\sigma(x_{d'}))^T$.  Then, it holds that
\begin{align}
&\bbE_{\boldsymbol{\eps}}\bigg[\sup_{\bW \in \calV}\sup_{f \in \calG} \bigg\|\frac{1}{n} \sum_{i=1}^n \eps_i \sigma(\bW f(\bx_i))\bigg\|_{\infty}  \bigg] \nn\\
&\qquad \qquad \leq \gamma(\mu)  \beta \bbE_{\boldsymbol{\eps}}\bigg[ \sup_{f \in \calG}\bigg\|\frac{1}{n}\sum_{i=1}^n \eps_i f(\bx_i)\bigg\|_{\infty}  \bigg] +   \frac{|\sigma(0)|}{\sqrt{n}},
\end{align} 
where
\begin{align}
\gamma(\mu)=\begin{cases} \mu, &\enspace \mbox{if} \enspace \sigma-\sigma(0) \enspace \mbox{is odd or belongs to $\calL$}\\  2\mu, &\enspace \mbox{if} \enspace \sigma-\sigma(0) \enspace \mbox{is even}\\  3\mu, &\enspace \mbox{if} \enspace \sigma-\sigma(0) \enspace \mbox{others} \end{cases}.
\end{align}
\end{lemma}
\begin{remark}  The convolutional layer with average pooling, dropout layers, and dense layers can be viewed as compositions of linear mappings and pointwise activation functions. Therefore, Lemmas \ref{goodlem0inftyto1}, \ref{dropoutlem}, and \ref{goodlem} are derived by applying Theorem \ref{main:thm1} to the pointwise mappings and Theorem \ref{lem:linear} to the linear mappings. 
\end{remark}

\subsubsection{Rademacher complexity bounds for CNNs}
In this section, we show the following result. 
\begin{theorem} \label{lem:survive} 
Let
\begin{align}
\calL=\big\{\psi_{\alpha}: \psi_{\alpha}(x)= ReLU(x)-\alpha ReLU(-x) \enspace \forall x \in \bbR, \alpha \in [0,1]\big\} \label{defL}.
\end{align}
 Consider the CNN defined in Section \ref{sec:dnm} where $$[f_i(\bx)]_j=\sigma_i\big(\bw_{j,i}^T f_{i-1}(\bx)\big) \enspace \forall j \in [d_{i+1}]$$ and $\sigma_i$ is $\mu_i$-Lipschitz. 
In addition, $f_0(\bx)=[\bx^T,1]^T, \enspace \forall \bx \in \bbR^d$ and $\bx$ is normalised such that $\|\bx\|_{\infty}\leq 1$.
 Let
 \begin{align}
 \calK&=\{i \in [L]: \mbox{layer} \enspace  i \enspace \mbox{is a convolutional layer with average pooling}\},\\
  \calD&=\{i \in [L]: \mbox{layer} \enspace  i \enspace \mbox{is a dropout layer}\}.
 \end{align}
 We assume that there are $Q_i$ kernel matrices $W_i^{(l)}$'s of size $r_i^{(l)}\times r_i^{(l)}$ for the $i$-th convolutional layer. 
 For all the (dense) layers that are not convolutional, we define $\bW_i$ as their coefficient matrices.  In addition, define
	\begin{align}
 \gamma_{\rm{cvl,i}}&=\gamma(\mu_i) \sup_{l \in [Q_i]}\sum_{u=1}^{r_{i,l}} \sum_{v=1}^{r_{i,l}} \big|W_i^{(l)}(u,v)\big|, \\
 \gamma_{\rm{dl,i}}&=\gamma(\mu_i) \big\|\bW_i \big\|_{\infty}  \qquad  i \notin \calK.
	\end{align}    
where
\begin{align}
\gamma(\mu_i)=\begin{cases} \mu_i, &\enspace \mbox{if} \enspace \sigma_i-\sigma_i(0) \enspace \mbox{is odd or belongs to $\calL$}\\  2\mu, &\enspace \mbox{if} \enspace \sigma_i-\sigma_i(0) \enspace \mbox{is even}\\  3\mu, &\enspace \mbox{if} \enspace \sigma_i-\sigma_i(0) \enspace \mbox{others} \end{cases}.
\end{align}
Then, the Rademacher complexity, $\calR_n(\calF)$, satisfies 
\begin{align}
\calR_n(\calF)&:= \bbE_{\boldsymbol{\eps}}\bigg[\sup_{f \in \calF_+}\bigg\|\frac{1}{n}\sum_{i=1}^n \eps_i f(\bx_i)\bigg\|_{\infty} \bigg] \nn\\
&\qquad \leq F_L \label{defrademacher},
\end{align}
where $F_L$ is estimated by the following recursive expression:
\begin{align}
F_i=\begin{cases} F_{i-1} \gamma_{\rm{cvl,i}} + \frac{|\sigma_i(0)|}{\sqrt{n}},&  i \in \calK\\ F_{i-1} \gamma_{\rm{dl,i}} + \frac{|\sigma_i(0)|}{\sqrt{n}},&  i \notin (\calK\cup \calD) \\F_{i-1},&  i \in \calD \end{cases}
\end{align} and $F_0=  \sqrt{\frac{d+1}{n}} $. 
\end{theorem}
\begin{proof}
This is a direct application of Lemmas \ref{goodlem0inftyto1}, \ref{dropoutlem}, and \ref{goodlem}.  
\end{proof}
\section{Generalization Bounds for DNNs}
\subsection{Loss Generalization Bounds for ReLU-DNNs}
In this section, we present a generalization bound for the loss in ReLU-DNNs. Specifically, the following result provides a bound on the square loss generalization gap for ReLU-DNNs as $M$-class classifiers. 
\begin{theorem} \label{thm:main3} The square loss generalization gap for a ReLU-DNN acting as an 
$M$-class classifier, where the final layer activation function is the softmax or sigmoid, satisfies the following:
\begin{align}
&\sup_{g \in \hat{\calF}} \bigg|\bbE_{(\bx,y) \sim \calD} \big[\psi(g(\bx),y)]-\frac{1}{n}\sum_{i=1}^n \psi(g(\bx_i),y_i)\bigg| \nn\\
&\qquad  \leq 16M^2\inf_{p\geq 1} \inf_{L' \in [1,L]} \bigg(\frac{d+1}{n}\bigg)^{1/p} \bigg(\prod_{l=L'+1}^L  \|\bW_l\|_{\infty} \bigg) \bigg(\prod_{l=1}^{L'} \|\bW_l\|_p\bigg)\nn\\
&\qquad \qquad + \frac{2M^2}{\sqrt{n}}+ 4M^2 \sqrt{\frac{2 \log \frac{4}{\delta}}{n}}.
\end{align}
\end{theorem}
\begin{proof}
By Lemma \ref{softmax:lem} and Lemma \ref{max:lem},  we have
\begin{align}
&\bigg|\bbE_{(x,y) \sim \calD} \big[\psi(g(\bx),y)]-\frac{1}{n}\sum_{i=1}^n \psi(g(\bx_i),y_i)\bigg| \nn\\
&\qquad  \leq 4M \bbE_{\boldsymbol{\eps}}\bigg[ \sup_{g \in \hat{\calF}_+ }\bigg\|\frac{1}{n}\sum_{i=1}^n \eps_i g(\bx_i)\bigg\|_{\infty} \bigg]+ \frac{2M^2}{\sqrt{n}}+ 4M^2 \sqrt{\frac{2 \log \frac{4}{\delta}}{n}}\\
&\qquad  \leq 8M^2 \bbE_{\boldsymbol{\eps}}\bigg[ \sup_{f \in \calF_+ }\bigg\|\frac{1}{n}\sum_{i=1}^n \eps_i f(\bx_i)\bigg\|_{\infty} \bigg]+ \frac{2M^2}{\sqrt{n}}+ 4M^2 \sqrt{\frac{2 \log \frac{4}{\delta}}{n}}\\
&\qquad \leq 16 M^2 \inf_{p\geq 1} \inf_{L' \in [1,L]} \bigg(\frac{d+1}{n}\bigg)^{1/p} \bigg(\prod_{l=L'+1}^L  \|\bW_l\|_{\infty} \bigg) \bigg(\prod_{l=1}^{L'} \|\bW_l\|_p\bigg)\nn\\
&\qquad \qquad + \frac{2M^2}{\sqrt{n}}+ 4M^2 \sqrt{\frac{2 \log \frac{4}{\delta}}{n}} \label{cocy},
\end{align} where \eqref{cocy} follows from Theorem \ref{lem:survive0}. 
\end{proof}
\subsection{Generalization Error Bounds for CNNs}

\begin{definition}
Recall that the CNN model is a special case of DNNs as discussed in Section \ref{sec:dnm}. The margin of a labelled example $(\bx,y)$ is defined as
\begin{align}
m_f(\bx,y):=g(f(\bx),y)-\max_{y'\neq y} g(f(\bx),y'),
\end{align}  so $f$ mis-classifies the labelled example $(\bx,y)$ if and only if $m_f(\bx,y)< 0$.
The generalisation error is defined as $\bbP(m_f(\bx,y)< 0)$.  It is easy to see that 
	$\bbP(m_f(\bx,y)< 0)=\bbP\big(\bw_y^T f(\bx)< \max_{y'\in \calY} \bw_{y'}^T f(\bx)\big)$. 
\end{definition}
\begin{remark} In case $$\{g(f(\bx),y)< \max_{y'\neq y} g(f(\bx),y')\} \subset \{\tilde{g}(f_k(\bx),y))<\max_{y'\neq y} \tilde{g}(f_k(\bx),y'))\}$$ for some $k \in [L]$ where $\tilde{g}$ is a specific function, we have $$\bbP(m_f(\bx,y)< 0)\leq \bbP(\tilde{g}(f_k(\bx),y))<\max_{y'\neq y} \tilde{g}(f_k(\bx),y'))).$$ Hence, we can bound the generalisation error by using only a part of CNN networks (from layer $0$ to layer $k$). More specifically, if the last layers of CNN is the softmax,  it holds that
\begin{align}
\bbP(m_f(\bx,y)<0)\leq \bbP(g(f_{L-1}(\bx),y))<\max_{y'\neq y} g(f_{L-1}(\bx),y'))),
\end{align} where $f_{L-1}$ represents the output of the last hidden layer, just before the final activation function (i.e., softmax).
\end{remark}

Now, we prove the following lemma.
	
\begin{lemma} \label{lem:abound} Let $\calF$ be a class of function from $\calX$ to $\bbR^m$.  For CNNs for classification, it holds that
\begin{align}
\bbE_{\boldsymbol{\eps}}\bigg[\sup_{f \in \calF} \bigg|\frac{1}{n}\sum_{i=1}^n \eps_i m_f(\bx_i,y_i)\bigg|\bigg]\leq \beta(M) \bbE_{\boldsymbol{\eps}}\bigg[\sup_{f \in \calF} \bigg\|\frac{1}{n}\sum_{i=1}^n \eps_i m_f(\bx_i)\bigg\|_{\infty} \bigg] \label{kbound},
\end{align}
where
\begin{align}
\beta(M)=\begin{cases} M(2M-1),& \qquad M>2 \\ 2M,& \qquad M=2\end{cases} \label{defbetaM}.
\end{align}
\end{lemma}
For $M>2$,  \eqref{kbound} is a result of \citep[Proof of Theorem 11]{Koltchinskii2002}. We improve this constant for $M=2$. 
Based on the above Rademacher complexity bounds and a justified application of McDiarmid's inequality, we obtains the following generalization for deep learning with i.i.d. datasets.
\begin{theorem} \label{main}  Let $\gamma>0$ and define the following function (the $\gamma$-margin cost):
	\begin{align}
	\zeta(x):=\begin{cases} 0, &\gamma\leq x \\ 1- x/\gamma,&0\leq x \leq \gamma \\ 1, & x\leq 0  \end{cases} \label{def1}.
	\end{align}
Recall the definition of the average Rademacher complexity $\calR_n(\calF)$ in \eqref{defrademacher} and the definition of $\beta(M)$ in \eqref{defbetaM}. Let $\{(\bx_i,y_i)\}_{i=1}^n \sim P_{\bx y}$ for some joint distribution $P_{\bx y}$ on $\calX \times \calY$. Then, for any $t>0$, the following holds:
\begin{align}
&\bbP\bigg\{\exists f \in \calF: \bbP\big(m_f(\bx,y)\leq 0 \big)> \inf_{\gamma\in (0,1]}\bigg[  \frac{1}{n}\sum_{i=1}^n \zeta(m_f(\bx_i,y_i))\nn\\
&\quad +  \frac{2\beta(M)}{\gamma} \calR_n(\calF) + \frac{2t+\sqrt{\log \log_2 (2 \gamma^{-1})}}{\sqrt{n}}\bigg]\bigg\} \leq 2\exp(-2t^2).
\end{align} 
\end{theorem}
\begin{corollary}(PAC-bound)\label{PACbound} Recall the definition of the average Rademacher complexity $\calR_n(\calF)$ in \eqref{defrademacher} and the definition of $\beta(M)$ in \eqref{defbetaM}. Let $\{(\bx_i,y_i)\}_{i=1}^n \sim P_{\bx y}$ for some joint distribution $P_{\bx y}$ on $\calX \times \calY$. Then, for any $\delta \in (0,1]$, with probability at least $1-\delta$, it holds that
\begin{align}
&\bbP\big(m_f(\bx,y)\leq 0 \big)\leq \inf_{\gamma\in (0,1]} \bigg[ \frac{1}{n}\sum_{i=1}^n \bone\big\{m_f(\bx_i,y_i)\leq \gamma \big\}\nn\\
&\quad +   \frac{2\beta(M)}{\gamma} \calR_n(\calF)
+  \sqrt{\frac{\log \log_2 (2\gamma^{-1})}{n}} +\sqrt{\frac{2}{n}\log \frac{3}{\delta}}\bigg], \qquad \forall f \in \calF \label{outmatt}.
\end{align} 
\end{corollary}
\begin{proof}		
This result is obtain from Theorem \ref{main} by choosing $t>0$ such that $3\exp(-2t^2)=\delta$. 
\end{proof}

\section{Numerical Results}
\subsection{Experiment 1}
\begin{figure}[htbp]
\begin{center}
\includegraphics[scale=0.7]{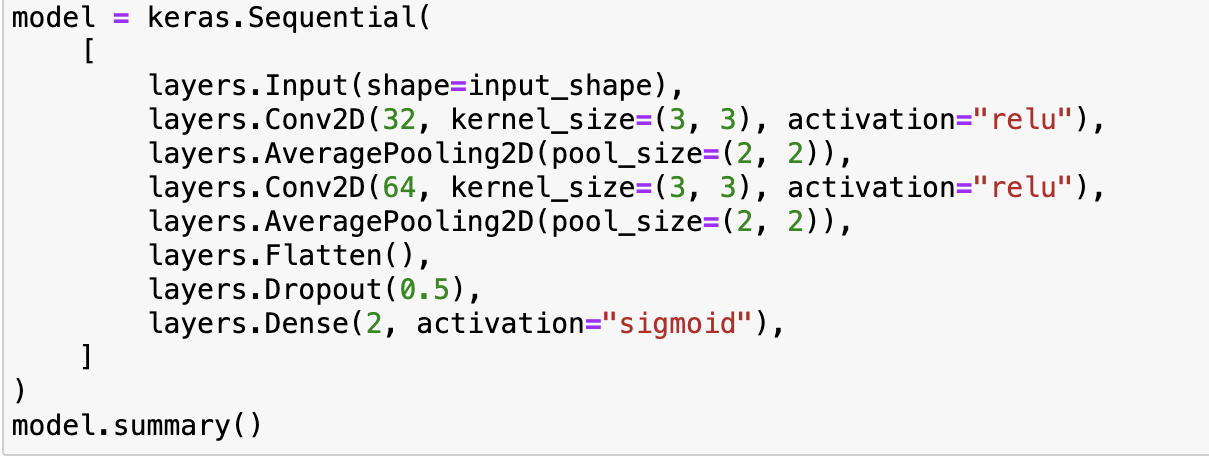}
\caption{CNN model with ReLU activations}
\label{CNNsigmoid2}
\end{center}
\end{figure}
In this experiment, we use a CNN (cf. Fig.~\ref{CNNsigmoid2}) for classifying MNIST images (class 0 and class 1), i.e., $M=2$, which consists of $n=12665$ training examples.

For this model,  we use ReLU for the first two convolutional layers, and the sigmoid $\sigma$ for the dense layer which satisfies $\sigma(x)-\sigma(0)= \frac{1}{2}\tanh\big(\frac{x}{2}\big)$ (an odd function with Lipschitz constant $1/4$). 

Hence, by Theorem \ref{lem:survive0} and Lemma \ref{lem:exttalaa} it holds that
$\calR_n(\calF) \leq F_3$,
where
\begin{align}
F_3 &\leq \underbrace{\frac{1}{4} \|\bW\|_{\infty}F_2 +\frac{1}{2\sqrt{n}}}_{\mbox{Dense layer}},\\
F_2 &\leq \underbrace{\sup_{l \in [64]}\big\|\bW_2^{(l)}\big\|_2 F_1 }_{\mbox{The second convolutional layer}},\\
F_1&\leq \underbrace{ \sup_{l \in [32]}\big\|\bW_1^{(l)}\big\|_2 F_0 }_{\mbox{The first convolutional layer}},\\
F_0&= \sqrt{\frac{d+1}{n}}. 
\end{align}
Numerical estimation of $F_3$ gives $\calR_n(\calF)\leq 0.01594$. 

By Corollary \ref{PACbound} with probability at least $1-\delta$, it holds that
\begin{align}
&\bbP\big(m_f(\bx,y)\leq 0 \big)\leq \inf_{\gamma\in (0,1]} \bigg[ \frac{1}{n}\sum_{i=1}^n \zeta\big(m_f(\bx_i,y_i) \big)\nn\\
&\quad +   \frac{4M}{\gamma} \calR_n(\calF)+   \sqrt{\frac{\log \log_2 (2\gamma^{-1})}{n}} +\sqrt{\frac{2}{n}\log \frac{3}{\delta}}\bigg]
\end{align}
By setting $\delta=5\%$, $\gamma=1$,  the generalisation error can be upper bounded by
\begin{align}
\bbP\big(m_f(\bx,y)\leq 0 \big)\leq 0.158446. 
\end{align}
For this model, the reported test error is $0.0009456$.  

By Theorem \ref{thm:main3} with $M=2$, the square loss generalization gap satisfies the following:
\begin{align}
\mbox{MSE}:=&\sup_{g \in \hat{\calF}} \bigg|\bbE_{(\bx,y) \sim \calD} \big[\psi(g(\bx),y)]-\frac{1}{n}\sum_{i=1}^n \psi(g(\bx_i),y_i)\bigg| \nn\\
&\qquad  \leq 16 F_2+ \frac{2}{\sqrt{n}}+ 4 \sqrt{\frac{2 \log \frac{4}{\delta}}{n}}.
\end{align}
At $\delta=0.05$, it holds that
\begin{align}
\mbox{MSE}\leq 0.1865. 
\end{align}
For this model, the reported MSE is $0.00032$.

\subsection{Experiment 2}
In this experiment, we use a CNN (cf. Fig.~\ref{CNNsigmoid}) for classifying MNIST images (class 0 and class 1), i.e., $M=2$, which consists of $n=12665$ training examples.

For this model,  the sigmoid activation $\sigma$ satisfies $\sigma(x)-\sigma(0)= \frac{1}{2}\tanh\big(\frac{x}{2}\big)$ which is odd and has the Lipschitz constant $1/4$. 
In addition, for the dense layer, the sigmoid activation satisfies
\begin{align}
\big|\sigma(x)-\sigma(y)\big|\leq \frac{1}{4}\big|x-y|, \qquad \forall x,y \in \bbR. 
\end{align}
Hence, by Theorem \ref{lem:survive} 
it holds that
$\calR_n(\calF) \leq F_3$,
where
\begin{align}
F_3 &\leq \underbrace{\frac{1}{4} \|\bW\|_{\infty}F_2 +\frac{1}{2\sqrt{n}}}_{\mbox{Dense layer}},\\
F_2 &\leq \underbrace{\bigg(\frac{1}{4} \sup_{l \in [64]}\sum_{u=1}^{3} \sum_{v=1}^{3} \big|W_2^{(l)}(u,v)\big|\bigg)F_1+\frac{1}{2\sqrt{n}} }_{\mbox{The second convolutional layer}},\\
F_1&\leq \underbrace{\bigg(\frac{1}{4} \sup_{l \in [32]}\sum_{u=1}^{3} \sum_{v=1}^{3} \big|W_1^{(l)}(u,v)\big|\bigg)F_0+\frac{1}{2\sqrt{n}} }_{\mbox{The first convolutional layer}},\\
F_0&= \sqrt{\frac{d+1}{n}}. 
\end{align}
Numerical estimation of $F_3$ gives $\calR_n(\calF)\leq 0.00859$. 

By Corollary \ref{PACbound} with probability at least $1-\delta$, it holds that
\begin{align}
&\bbP\big(m_f(\bx,y)\leq 0 \big)\leq \inf_{\gamma\in (0,1]} \bigg[ \frac{1}{n}\sum_{i=1}^n \zeta\big(m_f(\bx_i,y_i) \big)\nn\\
&\quad +   \frac{4M}{\gamma} \calR_n(\calF)+   \sqrt{\frac{\log \log_2 (2\gamma^{-1})}{n}} +\sqrt{\frac{2}{n}\log \frac{3}{\delta}}\bigg]
\end{align}
By setting $\delta=5\%$, $\gamma=0.5$,  the generalisation error can be upper bounded by
\begin{align}
\bbP\big(m_f(\bx,y)\leq 0 \big)\leq 0.189492.  
\end{align}
For this model, the reported test error is $0.0028368$.  
\begin{figure}[htbp]
\begin{center}
\includegraphics[scale=0.7]{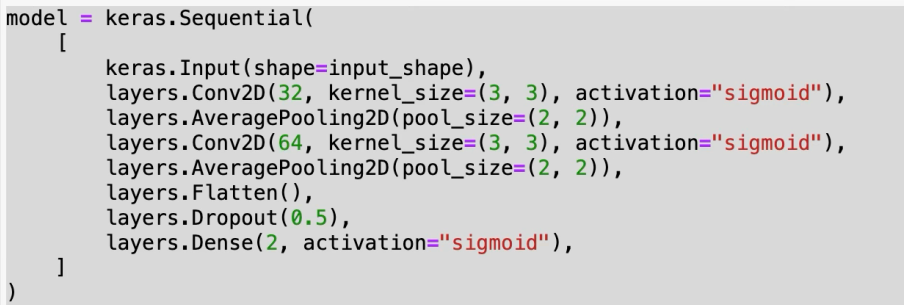}
\caption{CNN model with sigmoid activations}
\label{CNNsigmoid}
\end{center}
\end{figure}\\

\subsection{Experiment 3}
\begin{figure}[htbp]
\begin{center}
\includegraphics[scale=0.7]{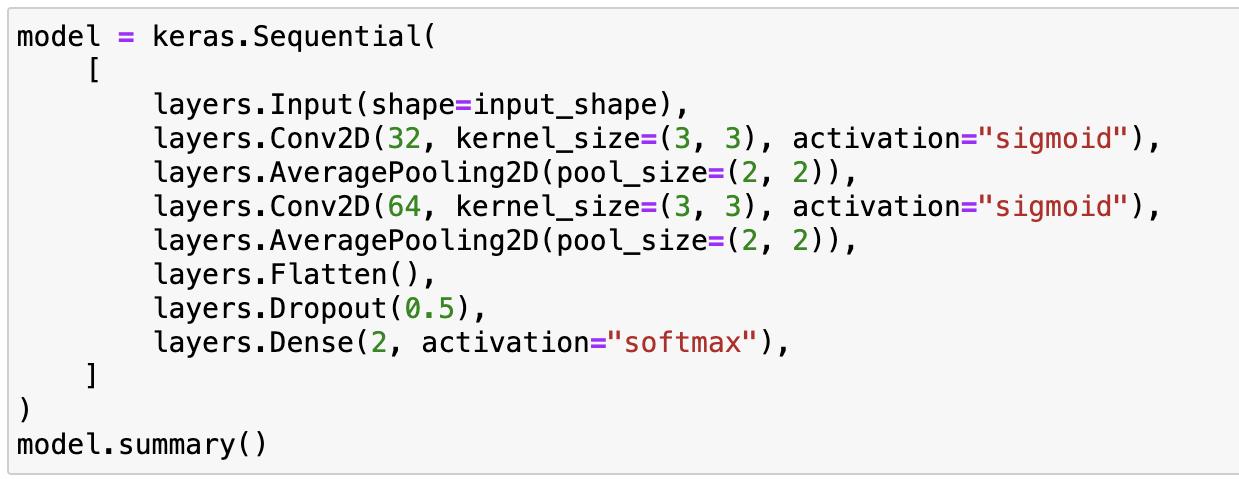}
\caption{CNN model with sigmoid activations}
\label{CNNsigmoid3}
\end{center}
\end{figure}

In this experiment, we use a CNN (cf. Fig.~\ref{CNNsigmoid3}) for classifying MNIST images (class 0 and class 1), i.e., $M=2$, which consists of $n=12665$ training examples.

For this model,  the sigmoid activation $\sigma$ satisfies $\sigma(x)-\sigma(0)= \frac{1}{2}\tanh\big(\frac{x}{2}\big)$ which is odd and has the Lipschitz constant $1/4$. 
In addition, for the dense layer, the sigmoid activation satisfies
\begin{align}
\big|\sigma(x)-\sigma(y)\big|\leq \frac{1}{4}\big|x-y|, \qquad \forall x,y \in \bbR. 
\end{align}
For this example, we assume that we compare the outputs at the layer right before the softmax layer to bound the generalisation error. 
Then, by Theorem \ref{lem:survive} and Lemma \ref{lem:exttalaa} it holds that
$\calR_n(\calF) \leq F_2$,
where
\begin{align}
F_2 &\leq \underbrace{\bigg(\frac{1}{4} \sup_{l \in [64]}\sum_{u=1}^{3} \sum_{v=1}^{3} \big|W_2^{(l)}(u,v)\big|\bigg)F_1+\frac{1}{2\sqrt{n}} }_{\mbox{The second convolutional layer}},\\
F_1&\leq \underbrace{\bigg(\frac{1}{4} \sup_{l \in [32]}\sum_{u=1}^{3} \sum_{v=1}^{3} \big|W_1^{(l)}(u,v)\big|\bigg)F_0+\frac{1}{2\sqrt{n}} }_{\mbox{The first convolutional layer}},\\
F_0&= \sqrt{\frac{d+1}{n}}. 
\end{align}
Numerical estimation of $F_2$ gives $\calR_n(\calF)\leq 0.03074$. 

By Corollary \ref{PACbound} with probability at least $1-\delta$, it holds that
\begin{align}
&\bbP\big(m_f(\bx,y)\leq 0 \big)\leq \inf_{\gamma\in (0,1]} \bigg[ \frac{1}{n}\sum_{i=1}^n \zeta\big(m_f(\bx_i,y_i) \big)\nn\\
&\quad +   \frac{4M}{\gamma} \calR_n(\calF)+   \sqrt{\frac{\log \log_2 (2\gamma^{-1})}{n}} +\sqrt{\frac{2}{n}\log \frac{3}{\delta}}\bigg]
\end{align}
By setting $\delta=5\%$, $\gamma=1$,  the generalisation error can be upper bounded by
\begin{align}
\bbP\big(m_f(\bx,y)\leq 0 \big)\leq 0.2775.  
\end{align}
For this model, the reported test error is $0.001418$. 
 
\section{Comparision with Golowich et al.'s bound \citep{Golowich2018}} \label{comp:sec}
In \citep[Section 4]{Golowich2018}, the authors present an upper bound on Rademacher complexity for DNNs with ReLU activation functions as follows:
\begin{align}
\calR_n(\calF)=O\bigg(\prod_{j=1}^L \|\bW_j\|_F \max\bigg\{1,\log\bigg( \prod_{j=1}^L \frac{\|\bW_j\|_F}{\|\bW_j\|_2}\ \bigg)\bigg\}\min\bigg\{\frac{\max\{1,\log n\}^{3/4}}{n^{1/4}}, \sqrt{\frac{L}{n}} \bigg\}\bigg) \label{amut}
\end{align}
where $\bW_1,\bW_2,\cdots, \bW_L$ are the parameter matrices of the $L$ layers. \\
Now, let $\Gamma $ be the term inside the bracket in \eqref{amut}, and define
\begin{align}
\beta=\min_j \frac{\|\bW_j\|_F}{\|\bW_j\|_2}\geq 1.
\end{align}
Then, from \eqref{amut} we have
\begin{align}
\Gamma \geq \prod_{j=1}^L \|\bW_j\|_F \min\bigg\{\frac{\max\{1,\log n\}^{3/4} \sqrt{\max\{1,L \log \beta\}}}{n^{1/4}}, \sqrt{\frac{L}{n}} \bigg\} \label{eq65}.
\end{align}
For the general case, it holds that $\beta>1$. Hence, from \eqref{eq65} we have
\begin{align}
\calR_n(\calF)=O\bigg( \sqrt{\frac{L}{n}} \prod_{j=1}^L  \|\bW_j\|_F\bigg) \label{eqh1}.
\end{align} 
As analysed in \citep{Golowich2018}, this bound improves many previous bounds, including Neyshabur et al.'s bound \cite{Neyshabur2015},  \cite{Neyshabur2018APA} which are known to be vacuous for certain ReLU DNNs \citep{NagarajanKolter2019}. 

By using Theorem \ref{lem:survive0} with $p=2$ we have
\begin{align}
\calR_n(\calF)=O \bigg(\sqrt{\frac{1}{n}} \prod_{j=1}^L \|\bW_j\|_2\bigg) \label{eqh2}
\end{align}
for ReLU-DNNs, i.e., all the activation functions of the DNN belong to ReLU family.  Note that $\|\bW_l\|_2 \leq \|\bW_l\|_F$ for all $l \in [L]$. Hence, our bound improves upon \citep{Golowich2018} in both the product of the norm and the constant factor preceding the norm, particularly when $L>>$. 

Additionally, by Theorem \ref{lem:survive} and Lemma \ref{goodlem}, we can show that
\begin{align}
\calR_n(\calF)=O\bigg( \sqrt{\frac{1}{n}} \prod_{j=1}^L\mu_j  \|\bW_j\|_{\infty} \bigg) \label{eqh2b}
\end{align}
for DNNs with some special classes of activation functions, including old activation functions, where $\mu_j$ is the Lipschitz constant of the $j$-layer activation function. 
The bound  in \eqref{eqh2b} is applicable to a broad range of activation functions. While the works of \citep{Golowich2018}, \cite{Neyshabur2015}, and \cite{Neyshabur2018APA} primarily focus on ReLU-based DNNs, our approach generalizes to a broader range of activation functions.

\clearpage 



\bibliography{isitbib}  
\bibliographystyle{iclr2025_conference}

\newpage 
\appendix
\section{Proof of Theorem \ref{aux_lem}}\label{aux_lem:proof}
Observe that
\begin{align}
\psi (x)&=ReLU(\bx)-\alpha ReLU(-x)\\
&= \frac{x+|x|}{2} - \alpha \frac{-x+|x|}{2} \\
&= \frac{1+\alpha}{2} x + \frac{(1-\alpha)}{2} |x|. 
\end{align}

Then, for any $p\geq 1$ we have
\begin{align}
&\frac{1}{n}\bbE_{\boldsymbol{\eps}}\bigg[\sup_{h \in \calH}\bigg\|\sum_{i=1}^n \eps_i \psi (h(\bx_i)) \bigg\|_p  \bigg]\\
&\qquad \leq \bigg(\frac{1+\alpha}{2}\bigg) \frac{1}{n}\bbE_{\boldsymbol{\eps}}\bigg[\sup_{h \in \calH}\bigg\|\sum_{i=1}^n \eps_i h(\bx_i) \bigg\|_p \bigg]\nn\\
&\qquad \qquad +\bigg(\frac{1-\alpha}{2} \bigg) \frac{1}{n}\bbE_{\boldsymbol{\eps}}\bigg[\sup_{h \in \calH}\bigg\|\sum_{i=1}^n \eps_i \big|h(\bx_i)\big|  \bigg\|_p \bigg] \label{as3}\\
&\qquad \leq \frac{1}{n}\bbE_{\boldsymbol{\eps}}\bigg[\sup_{h \in \calH_+}\bigg\|\sum_{i=1}^n \eps_i h(\bx_i)\bigg\|_p \bigg] \label{as4},
\end{align} where \eqref{as3} follows from Minkowski's inequality \cite{Royden}, and \eqref{as4} follows from the fact that $|h| \in \calH_+$ if $h \in \calH$.

\section{Proof of Theorem \ref{main:thm1}}\label{proof:main:thm1}
The proof of Theorem \ref{main:thm1} is a combination of the following contraction lemmas.

\begin{lemma} \label{lem:exttalaa}
	Let $\calH$ be a set of functions mapping $\calX$ to $\bbR^m$. Define
	\begin{align}
	\calH_+=\calH \cup \big\{-h: h \in \calH\big\}.
	\end{align} 
For any $\mu>0$, let  $\psi: \bbR^m \to \bbR^m$ such that $ \big|\psi_j(\bx)-\psi_j(\bx')\big| \leq \mu |x_j-x'_j|, \enspace \forall (\bx, \bx' ) \in \bbR^m \times \bbR^m \}, \forall j \in [m]$ and $\psi-\psi(\b0)$ is odd. In addition, $\psi_j(\b0)$ does not depend on $j$. Then, it holds that
	\begin{align}
	&\bbE_{\boldsymbol{\eps}}\bigg[\sup_{h \in \calH} \bigg\| \frac{1}{n}\sum_{i=1}^n\eps_i \psi (h(\bx_i)) \bigg\|_{\infty}  \bigg]\nn\\
	&\qquad \leq \mu \bbE_{\boldsymbol{\eps}}\bigg[ \sup_{h \in \calH_+ }\bigg\|\frac{1}{n}\sum_{i=1}^n \eps_i h(\bx_i)\bigg\|_{\infty} \bigg]+\frac{1}{\sqrt{n}} \sup_{j \in [m]}\big|\psi_j(\b0)\big|. 
	\end{align} Here, we define $\psi(\bx):=(\psi(x_1), \psi(x_2),\cdots, \psi(x_m))^T$ for any $\bx=(x_1,x_2,\cdots,x_m)^T \in \bbR^m$.  
\end{lemma}

Then, the following is a direct result of Lemma \ref{lem:exttalaa} by setting $\psi_j(\bx)=\psi(x_j)$ for all $j \in [m], \bx \in \bbR^m$ for some $\mu$-Lipschitz function $\psi: \bbR \to \bbR$. 
\begin{corollary} \label{lem:exttalaaex}
Let $\calH$ be a set of functions mapping $\calX$ to $\bbR^m$. Define
	\begin{align}
	\calH_+=\calH \cup \big\{-h: h \in \calH\big\}.
	\end{align} 
For any $\mu>0$, let  $\psi: \bbR \to \bbR$ such that $ \big|\psi(x)-\psi(x')\big| \leq \mu |x-x'|, \enspace \forall (x, x' ) \in \bbR \times \bbR \}$ and $\psi-\psi(0)$ is odd. Then, it holds that
	\begin{align}
	&\bbE_{\boldsymbol{\eps}}\bigg[\sup_{h \in \calH} \bigg\| \frac{1}{n}\sum_{i=1}^n\eps_i \psi (h(\bx_i)) \bigg\|_{\infty}  \bigg]\nn\\
	&\qquad \leq \mu \bbE_{\boldsymbol{\eps}}\bigg[ \sup_{h \in \calH_+ }\bigg\|\frac{1}{n}\sum_{i=1}^n \eps_i h(\bx_i)\bigg\|_{\infty} \bigg]+\frac{1}{\sqrt{n}} \big|\psi(0)\big|. 
	\end{align} Here, we define $\psi(\bx):=(\psi(x_1), \psi(x_2),\cdots, \psi(x_m))^T$ for any $\bx=(x_1,x_2,\cdots,x_m)^T \in \bbR^m$.  
\end{corollary}

\begin{lemma} \label{lem:exttalaa1}
	Let $\calH$ be a set of functions mapping $\calX$ to $\bbR^m$. Define
	\begin{align}
	\calH_+=\calH \cup \big\{-h: h \in \calH\big\} \cup  \big\{|h|: h \in \calH\big\}.
	\end{align} 
	For any $\mu>0$, let  $\psi: \bbR \to \bbR$ such that $ \big|\psi(x)-\psi(x')\big| \leq \mu |x-x'|, \enspace \forall (x, x' ) \in \bbR \times \bbR \}$ and $\psi-\psi(0)$ is even. Then, it holds that
	\begin{align}
	&\bbE_{\boldsymbol{\eps}}\bigg[\sup_{h \in \calH} \bigg\| \frac{1}{n}\sum_{i=1}^n\eps_i \psi (h(\bx_i)) \bigg\|_{\infty}  \bigg]\nn\\
	&\qquad \leq 2\mu \bbE_{\boldsymbol{\eps}}\bigg[ \sup_{h \in \calH_+ }\bigg\|\frac{1}{n}\sum_{i=1}^n \eps_i h(\bx_i)\bigg\|_{\infty} \bigg]+\frac{1}{\sqrt{n}} \big|\psi(0)\big|. 
	\end{align} Here, we define $\psi(\bx):=(\psi(x_1), \psi(x_2),\cdots, \psi(x_m))^T$ for any $\bx=(x_1,x_2,\cdots,x_m)^T \in \bbR^m$.  
\end{lemma}

\begin{lemma} \label{lem:exttalab1}
	Let $\calH$ be a set of functions mapping $\calX$ to $\bbR^m$. Define
	\begin{align}
	\calH_+=\calH \cup \big\{-h: h \in \calH\big\} \cup  \big\{|h|: h \in \calH\big\}.
	\end{align} 
	For any $\mu>0$, let  $\psi: \bbR \to \bbR$ such that $ \big|\psi(x)-\psi(x')\big| \leq \mu |x-x'|, \enspace \forall (x, x' ) \in \bbR \times \bbR \}$. Then, it holds that
	\begin{align}
	&\bbE_{\boldsymbol{\eps}}\bigg[\sup_{h \in \calH} \bigg\| \frac{1}{n}\sum_{i=1}^n\eps_i \psi (h(\bx_i)) \bigg\|_{\infty}  \bigg]\nn\\
	&\qquad \leq 3\mu \bbE_{\boldsymbol{\eps}}\bigg[ \sup_{h \in \calH_+ }\bigg\|\frac{1}{n}\sum_{i=1}^n \eps_i h(\bx_i)\bigg\|_{\infty} \bigg]+\frac{1}{\sqrt{n}} \big|\psi(0)\big|. 
	\end{align} Here, we define $\psi(\bx):=(\psi(x_1), \psi(x_2),\cdots, \psi(x_m))^T$ for any $\bx=(x_1,x_2,\cdots,x_m)^T \in \bbR^m$.  
\end{lemma}
These lemmas are proved in the next appendices. 

\section{Proof of Lemma \ref{lem:exttalaa}} \label{lem:exttala:proof}
First, we have
\begin{align}
&\bbE_{\boldsymbol{\eps}}\bigg[\sup_{h \in \calH}\bigg\|\frac{1}{n}\sum_{i=1}^n \eps_i \psi(h(\bx_i))\bigg\|_{\infty}  \bigg]\nn\\
&\qquad \leq \frac{1}{n}\bigg(\bbE_{\boldsymbol{\eps}}\bigg[\sup_{h \in \calH}\bigg\|\sum_{i=1}^n \eps_i \bigg(\psi(h(\bx_i) )-\psi(\b0 )\bigg)\bigg\|_{\infty} \bigg] + \bbE_{\boldsymbol{\eps}}\bigg[\sup_{h \in \calH} \bigg\| \sum_{i=1}^n \eps_i \psi(\b0) \bigg\|_{\infty} \bigg]\bigg)\label{M1}\\
&\qquad \leq \frac{1}{n}\bigg(\bbE_{\boldsymbol{\eps}}\bigg[\sup_{h \in \calH}\bigg\|\sum_{i=1}^n \eps_i \bigg(\psi(h(\bx_i) )-\psi(\b0)\bigg)\bigg\|_{\infty}  \bigg]+ \bbE_{\boldsymbol{\eps}}\bigg[\bigg\|\sum_{i=1}^n \eps_i \psi(\b0) \bigg\|_{\infty} \bigg]\bigg)\label{M1b}\\
&\qquad \leq \frac{1}{n}\bigg(\bbE_{\boldsymbol{\eps}}\bigg[\sup_{h \in \calH}\bigg\|\sum_{i=1}^n \eps_i \bigg(\psi(h(\bx_i) )-\psi(\b0)\bigg)\bigg\|_{\infty}  \bigg]+ \sup_{j\in [m]} \sqrt{\bbE_{\boldsymbol{\eps}}\bigg[\bigg( \sum_{i=1}^n \eps_i \psi_j(\b0) \bigg)^2}\bigg]\bigg)\label{M1c}\\
&\qquad\leq \frac{1}{n}\bbE_{\boldsymbol{\eps}}\bigg[\sup_{h \in \calH}\bigg\|\sum_{i=1}^n \eps_i \bigg(\psi (h(\bx_i))-\psi(\b0)\bigg)\bigg\|_{\infty} \bigg]+  \sup_{j \in [m]}\big|\psi_j(\b0)\big|\frac{1}{\sqrt{n}} \label{M2},
\end{align} where \eqref{M1} follows from the triangular property of the $\infty$-norm \cite{Royden},  and \eqref{M1c} follows from Cauchy-Schwarz inequality and the assumption that $\psi_j(\b0)$ does not depend on $j$.   

Define $\tilde{\psi}(\bx):=\psi(\bx)-\psi(\b0)$ for all $\bx \in \bbR^m$. Then, we have $\tilde{\psi}(\b0)=\b0$, and $\tilde{\psi}$ satisfies $|\tilde{\psi}_j(\bx)-\tilde{\psi}_j(\bx')|\leq \mu |x_j-x'_j|$ for all $\bx, \bx' \in \bbR^m, j \in [m]$. In addition, by our assumption, $\tilde{\psi}$ is odd. 

Let
\begin{align}
\Psi=\big\{\tilde{\psi}:\bbR^m \to\bbR^m, \mbox{st.} \enspace \tilde{\psi}(-\bx)=-\tilde{\psi}(\bx), |\tilde{\psi}_j(\bx)-\tilde{\psi}(\by)|\leq \mu |x_j-y_j|\enspace \forall \bx,\by \in \bbR^m, j \in [m] \big\}. 
\end{align}

It follows that
\begin{align}
&\bbE_{\boldsymbol{\eps}}\bigg[\sup_{h \in \calH}\bigg\|\frac{1}{n}\sum_{i=1}^n \eps_i \tilde{\psi} \big(h(\bx_i)\big) \bigg\|_{\infty} \bigg]\\
&\qquad = \frac{1}{n}\bbE_{\boldsymbol{\eps}}\bigg[ \sup_{j \in [m]} \sup_{h \in \calH}\bigg|\sum_{i=1}^n \eps_i\tilde{\psi}_j \big(h(\bx_i)\big)\bigg| \bigg]\\
&\qquad \leq \frac{1}{n}\bbE_{\boldsymbol{\eps}}\bigg[\sup_{s \in \{-1,+1\}^m}\sup_{j \in [m]}  \sup_{h \in \calH} s_j\bigg(\sum_{i=1}^n \eps_i \tilde{\psi}_j \big(h(\bx_i)\big)\bigg) \bigg]\\
&\qquad = \frac{1}{n}\bbE_{\boldsymbol{\eps}}\bigg[\sup_{s \in \{-1,+1\}^m}\sup_{j \in [m]} \sup_{h \in \calH} \sum_{i=1}^n \eps_i s_j  \tilde{\psi}_j\big( h(\bx_i)\big) \bigg] \label{magge}\\
&\qquad = \frac{1}{n}\bbE_{\boldsymbol{\eps}}\bigg[\sup_{s \in \{-1,+1\}^m}\sup_{j \in [m]} \sup_{h \in \calH} \sum_{i=1}^n \eps_i  \tilde{\psi}_j^{(\bs)} \big( h(\bx_i)\big) \bigg] \label{magge1}\\
&\qquad \leq  \frac{1}{n}\bbE_{\boldsymbol{\eps}}\bigg[\sup_{\tilde{\psi}\in \Psi} \sup_{s \in \{-1,+1\}^m}\sup_{j \in [m]} \sup_{h \in \calH} \sum_{i=1}^n \eps_i \tilde{\psi}_j \big(h(\bx_i)\big) \bigg] \label{magge2}\\
&\qquad \leq \frac{1}{n}\bbE_{\boldsymbol{\eps}}\bigg[\sup_{\tilde{\psi}\in \Psi}\sup_{j \in [m]} \sup_{h \in \calH_+} \sum_{i=1}^n \eps_i \tilde{\psi} _j\big(h(\bx_i)\big) \bigg] \label{espa},
\end{align} where \eqref{magge1} follows by defining $\tilde{\psi}^{(\bs)}=(s_1 \tilde{\psi}_1, s_2 \tilde{\psi}_2, \cdots, s_m \tilde{\psi}_m)$ for any $\bs \in \{-1,+1\}^m$, \eqref{magge2} follows from the fact that $\tilde{\psi}^{(\bs)} \in \Psi$ for any fixed $\bs$,  and \eqref{espa} follows from the definition of $\calH_+$. 

Now, we have
\begin{align}
& \bbE_{\boldsymbol{\eps}}\bigg[ \sup_{\tilde{\psi}\in \Psi} \sup_{j \in [m]} \sup_{h \in \calH_+} \sum_{i=1}^n \eps_i \tilde{\psi}_j \big(h(\bx_i)\big) \bigg]\nn\\
&\qquad =\bbE_{\eps_1,\eps_2,\cdots,\eps_{n-1}}\bigg[\bbE_{\eps_n}\bigg[ \sup_{\tilde{\psi}\in \Psi}\sup_{j \in [m]} \sup_{h \in \calH_+}u_{n-1}(h,j)+ \eps_n \tilde{\psi}_j \big(h(\bx_n)\big)  \bigg]\bigg],
\end{align}
where
\begin{align}
u_{n-1}(h,j):=\sum_{i=1}^{n-1} \eps_i \tilde{\psi}_j \big(h(\bx_i)\big) \label{eq116}.
\end{align}

Since $\eps_n$ is uniformly distributed over $\{-1,1\}$, we have
\begin{align}
&\bbE_{\eps_n}\bigg[ \sup_{\tilde{\psi}\in \Psi}\sup_{j \in [m]} \sup_{h \in \calH_+}u_{n-1}(h,j)+ \eps_n \tilde{\psi}_j \big(h(\bx_n)\big)   \bigg]\nn\\
&\qquad =\frac{1}{2}\bigg( \sup_{\tilde{\psi}\in \Psi}\sup_{j \in [m]} \sup_{h \in \calH_+}u_{n-1}(h,j)
+  \tilde{\psi}_j (h(\bx_n)) \bigg)  \nn\\
&\qquad \qquad  + \frac{1}{2}\bigg( \sup_{\tilde{\psi}\in \Psi}\sup_{j \in [m]} \sup_{h \in \calH_+}u_{n-1}(h,j)- \tilde{\psi}_j(h(\bx_n))\bigg) \label{c1}.
\end{align} 
Hence, we have
\begin{align}
&\bbE_{\boldsymbol{\eps}}\bigg[  \sup_{\tilde{\psi}\in \Psi}\sup_{j \in [m]} \sup_{h \in \calH_+} \sum_{i=1}^n \eps_i \tilde{\psi}_j \big(h(\bx_i)\big) \bigg]\nn\\
&\qquad= \frac{1}{2}\bbE_{\eps_1,\eps_2,\cdots,\eps_{n-1}}\bigg[ \sup_{\tilde{\psi}\in \Psi}\sup_{j \in [m]} \sup_{h \in \calH_+}u_{n-1}(h,j)
+  \tilde{\psi}_j (h(\bx_n))\bigg]   \nn\\
&\qquad \qquad  + \frac{1}{2}\bbE_{\eps_1,\eps_2,\cdots,\eps_{n-1}}\bigg[ \sup_{\tilde{\psi}\in \Psi}\sup_{j \in [m]} \sup_{h \in \calH_+}u_{n-1}(h,j)- \tilde{\psi}_j(h(\bx_n))\bigg]\\
&\qquad= \frac{1}{2}\bbE_{\eps_1,\eps_2,\cdots,\eps_{n-1}}\bigg[ \sup_{\tilde{\psi}\in \Psi}\sup_{j \in [m]} \sup_{h \in \calH_+}u_{n-1}(h,j)
+  \tilde{\psi}_j (h(\bx_n))\bigg]   \nn\\
&\qquad \qquad  + \frac{1}{2}\bbE_{\eps_1,\eps_2,\cdots,\eps_{n-1}}\bigg[ \sup_{\tilde{\psi}\in \Psi}\sup_{j \in [m]} \sup_{h \in \calH_+} -u_{n-1}(h,j)- \tilde{\psi}_j(h(\bx_n))\bigg] \label{keykey}\\
&\qquad= \bbE_{\eps_1,\eps_2,\cdots,\eps_{n-1}}\bigg[\frac{1}{2}\bigg( \sup_{\tilde{\psi}\in \Psi}\sup_{j \in [m]} \sup_{h \in \calH_+}u_{n-1}(h,j)
+  \tilde{\psi}_j (h(\bx_n))\bigg)\nn\\
&\qquad \qquad + \frac{1}{2} \bigg( \sup_{\tilde{\psi}\in \Psi}\sup_{j \in [m]} \sup_{h \in \calH_+} -u_{n-1}(h,j)- \tilde{\psi}_j(h(\bx_n))\bigg)\bigg] \label{key1key},
\end{align} where \eqref{keykey} follows from the fact that $(-\eps_1,-\eps_2,\cdots,-\eps_{n-1})$ is a tuple of independent Rademacher random variables which has the same distribution as $(\eps_1,\eps_2,\cdots,\eps_{n-1})$. 

Now, given any $j \in [m]$ and $\tilde{\psi}\in \Psi$ we have
\begin{align}
&\sup_{h \in \calH_+}u_{n-1}(h,j) +  \tilde{\psi}_j (h(\bx_n))\nn\\
&\qquad=\sup_{h \in \calH_+}u_{n-1}(-h,j) +  \tilde{\psi}_j(-h(\bx_n))\label{abfact}\\
&\qquad=\sup_{h \in \calH_+} -u_{n-1}(h,j) - \tilde{\psi}_j (h(\bx_n))\label{abfact2},
\end{align} where \eqref{abfact} follows from the assumption that $h \in \calH_+$ if and only if $-h \in \calH_+$, and \eqref{abfact2} follows from the assumption that $\tilde{\psi}$ is odd for any $\tilde{\psi}\in \Psi$.  
 
Hence, for any arbitrarily small $\delta>0$ there exists $j_0 \in [m], \tilde{\psi}_0 \in \Psi$ and $h_1, h_2 \in \calH$ such that
 \begin{align}
 \sup_{\tilde{\psi}\in \Psi} \sup_{j \in [m]} \sup_{h \in \calH_+}u_{n-1}(h,j)
+  \tilde{\psi}_j (h(\bx_n))\leq u_{n-1}(h_1,j_0)+ \tilde{\psi}_{0,j_0} (h_1(\bx_n)) +\delta \label{K1},
 \end{align}
 and
 \begin{align}
  \sup_{\tilde{\psi}\in \Psi}\sup_{j \in [m]} \sup_{h \in \calH_+} -u_{n-1}(h,j)
-  \tilde{\psi} ([h(\bx_n)]_j)\leq - u_{n-1}(h_2,j_0)- \tilde{\psi}_{0,j_0} (h_2(\bx_n))+\delta \label{K2}. 
 \end{align}
 
 It follows that
 \begin{align}
& \frac{1}{2}\bigg( \sup_{\tilde{\psi}\in \Psi}\sup_{j \in [m]} \sup_{h \in \calH_+}u_{n-1}(h,j)
+  \tilde{\psi}_j (h(\bx_n))\bigg)\nn\\
&\qquad \qquad + \frac{1}{2} \bigg( \sup_{\tilde{\psi}\in \Psi}\sup_{j \in [m]} \sup_{h \in \calH_+} -u_{n-1}(h,j)- \tilde{\psi}_j(h(\bx_n))\bigg)\nn\\
&\qquad \leq \frac{1}{2}\bigg(u_{n-1}(h_1,j_0)+ \tilde{\psi}_{0,j_0} (h_1(\bx_n))\bigg)\nn\\
&\qquad \qquad + \frac{1}{2}\bigg(- u_{n-1}(h_2,j_0)- \tilde{\psi}_{0,j_0} (h_2(\bx_n))\bigg)+\delta\\
&\qquad = \frac{1}{2} \big(u_{n-1}(h_1,j_0)-u_{n-1}(h_2,j_0)\big)\nn\\
&\qquad \qquad + \frac{1}{2} \big(\tilde{\psi}_{0,j_0} (h_1(\bx_n))-\tilde{\psi}_{0,j_0} (h_2(\bx_n))\big)+\delta\\
&\qquad \leq \frac{1}{2} \big(u_{n-1}(h_1,j_0)-u_{n-1}(h_2,j_0)\big)+\frac{\mu}{2} \big|[h_1(\bx_n)]_{j_0}-[h_2(\bx_n)]_{j_0}\big|\\
&\qquad = \frac{1}{2} \big(u_{n-1}(h_1,j_0)-u_{n-1}(h_2,j_0)\big)+\frac{\mu}{2} s_{12,n} \big([h_1(\bx_n)]_{j_0}-[h_2(\bx_n)]_{j_0}\big) \label{abh1}\\
&\qquad= \frac{1}{2} \big(u_{n-1}(h_1,j_0)+ \mu s_{12,n} [h_1(\bx_n)]_{j_0}\big)+ \frac{1}{2}\big(-u_{n-1}(h_2,j_0)-\mu s_{12,n} [h_2(\bx_n)]_{j_0}\big)\\
&\qquad\leq  \sup_{s_{12} \in \{-1,+1\}} \frac{1}{2} \big(u_{n-1}(h_1,j_0)+ \mu s_{12} [h_1(\bx_n)]_{j_0}\big)+ \frac{1}{2}\big(-u_{n-1}(h_2,j_0)-\mu s_{12} [h_2(\bx_n)]_{j_0}\big)\\
&\qquad \leq \sup_{s_{12} \in \{-1,+1\}} \frac{1}{2} \sup_{\tilde{\psi}\in \Psi}\sup_{j \in [m]} \sup_{h \in \calH_+} u_{n-1}(h,j)+\mu s_{12} [h(\bx_n)]_j\nn\\
&\qquad \qquad + \frac{1}{2} \sup_{\tilde{\psi}\in \Psi}\sup_{j \in [m]} \sup_{h \in \calH_+} -u_{n-1}(h,j)-\mu s_{12} [h(\bx_n)]_j\label{kpmod}\\
&\qquad \leq \sup_{s_{12} \in \{-1,+1\}} \frac{1}{2} \sup_{\tilde{\psi}\in \Psi}\sup_{j \in [m]} \sup_{h \in \calH_+} u_{n-1}(h,j)+\mu s_{12} [h(\bx_n)]_j\nn\\
&\qquad \qquad + \frac{1}{2} \sup_{\tilde{\psi}\in \Psi}\sup_{j \in [m]} \sup_{h \in \calH_+} u_{n-1}(h,j)-\mu s_{12} [h(\bx_n)]_j\label{kpmod2},
 \end{align} where
$
 s_{12,n}:= \sgn \big([h_1(\bx_n)]_{j_0}-[h_2(\bx_n)]_{j_0}\big)
 $ in \eqref{abh1}, and \eqref{kpmod2} follows from the fact that $-\tilde{\psi} \in \Psi$ if $\tilde{\psi} \in \Psi$. 
 
 From \eqref{kpmod2} we obtain
 \begin{align}
 & \frac{1}{2}\bigg( \sup_{\tilde{\psi}\in \Psi}\sup_{j \in [m]} \sup_{h \in \calH_+}u_{n-1}(h,j)
+  \tilde{\psi}_j (h(\bx_n))\bigg)\nn\\
&\qquad \qquad + \frac{1}{2} \bigg( \sup_{\tilde{\psi}\in \Psi}\sup_{j \in [m]} \sup_{h \in \calH_+} -u_{n-1}(h,j)- \tilde{\psi}_j(h(\bx_n))\bigg)\\
 &\qquad\leq   \sup_{s_{12} \in \{-1,+1\}} \bbE_{\tilde{\eps}_n} \bigg[\sup_{\tilde{\psi}\in \Psi}\sup_{j \in [m]} \sup_{h \in \calH_+} u_{n-1}(h,j)+\mu \tilde{\eps}_n s_{12} [h(\bx_n)]_j  \bigg] \label{laybay}
 \end{align} for some Rademacher random variable $\tilde{\eps}_n$ which is independent of $(\eps_1,\eps_2,\cdots,\eps_{n-1})$. 
 
 Since $\tilde{\eps}_n s_{12} \sim \tilde{\eps}_n$ for any fixed $s_{12} \in \{-1,+1\}$, from \eqref{laybay} we have
 \begin{align}
 & \frac{1}{2}\bigg( \sup_{\tilde{\psi}\in \Psi}\sup_{j \in [m]} \sup_{h \in \calH_+}u_{n-1}(h,j)
+  \tilde{\psi}_j(h(\bx_n))\bigg)\nn\\
&\qquad \qquad + \frac{1}{2} \bigg( \sup_{\tilde{\psi}\in \Psi}\sup_{j \in [m]} \sup_{h \in \calH_+} -u_{n-1}(h,j)- \tilde{\psi}_j(h(\bx_n))\bigg)\\
&\qquad \leq \bbE_{\tilde{\eps}_n} \bigg[\sup_{\tilde{\psi}\in \Psi}\sup_{j \in [m]} \sup_{h \in \calH_+} u_{n-1}(h,j)+\mu \tilde{\eps}_n [h(\bx_n)]_j  \bigg]  \label{laybay2}. 
 \end{align}
 
 From \eqref{key1key} and \eqref{laybay2} we obtain
 \begin{align}
 &\bbE_{\boldsymbol{\eps}}\bigg[  \sup_{\tilde{\psi}\in \Psi}\sup_{j \in [m]} \sup_{h \in \calH_+} \sum_{i=1}^n \eps_i \tilde{\psi}_j \big(h(\bx_i)\big) \bigg]\nn\\
 &\qquad \leq \bbE_{\eps_1,\eps_2,\cdots, \eps_{n-1}}\bigg[ \bbE_{\tilde{\eps}_n} \bigg[\sup_{\tilde{\psi}\in \Psi}\sup_{j \in [m]} \sup_{h \in \calH_+} u_{n-1}(h,j)+\mu \tilde{\eps}_n [h(\bx_n)]_j  \bigg] \bigg]\\
 &\qquad = \bbE_{\tilde{\eps}_n} \bigg[ \bbE_{\eps_1,\eps_2,\cdots, \eps_{n-1}}\bigg[\sup_{\tilde{\psi}\in \Psi}\sup_{j \in [m]} \sup_{h \in \calH_+} u_{n-1}(h,j)+\mu \tilde{\eps}_n [h(\bx_n)]_j \bigg]\bigg] \label{lame}.
 \end{align}

By continuing this process  (peeling) for $n-1$ more times, we have
\begin{align}
&\bbE_{\boldsymbol{\eps}}\bigg[\sup_{\tilde{\psi}\in \Psi}\sup_{j \in [m]} \sup_{h \in \calH_+}  u_{n-1}(h,j)+ \tilde{\eps}_n\mu [h(\bx_n)]_j\bigg]\nn\\
&\qquad \leq \mu \bbE_{\tilde{\eps}_1,\tilde{\eps}_2,\cdots, \tilde{\eps}_n} \bigg[\sup_{j \in [m]} \sup_{h \in \calH_+}\sum_{i=1}^n \tilde{\eps}_i [h(\bx_n)]_j\bigg]\\
&\qquad =\mu  \bbE_{\boldsymbol{\eps}}\bigg[\sup_{j \in [m]} \sup_{h \in \calH_+}\sum_{i=1}^n \eps_i [h(\bx_n)]_j\bigg] \\
&\qquad \leq \mu  \bbE_{\boldsymbol{\eps}}\bigg[\sup_{j \in [m]} \sup_{h \in \calH_+}\bigg|\sum_{i=1}^n \eps_i [h(\bx_n)]_j\bigg|\bigg] \\
&\qquad= \mu \bbE_{\boldsymbol{\eps}}\bigg[ \sup_{h \in \calH_+}\bigg\|\sum_{i=1}^n \eps_i h(\bx_n)\bigg\|_{\infty}\bigg] \label{pkey1p}. 
\end{align}

From \eqref{M2} and \eqref{pkey1p}, we obtain
\begin{align}
&\bbE_{\boldsymbol{\eps}}\bigg[\sup_{h \in \calH}\bigg\|\frac{1}{n}\sum_{i=1}^n \eps_i \psi(h(\bx_i))\bigg\|_{\infty}  \bigg]\nn\\
&\qquad \leq \mu \bbE_{\boldsymbol{\eps}}\bigg[ \sup_{h \in \calH_+}\bigg\|\frac{1}{n}\sum_{i=1}^n \eps_i h(\bx_n)\bigg\|_{\infty}\bigg]
+ \sup_{j \in [m]}\big|\psi_j(0)\big|\frac{1}{\sqrt{n}} \label{pkey3p}.
\end{align}

This concludes our proof of Lemma \ref{lem:exttalaa}.

\section{Proof of Lemma \ref{lem:exttalaa1}} \label{proof:lem:exttalaa1}
 Since $\psi(x)$ is even, it holds that
\begin{align}
\bbE\bigg[\sup_{h \in \calH} \frac{1}{n}\bigg\|\sum_{i=1}^n \eps_i \psi(h(\bx_i))\bigg\|_{\infty} \bigg]  =\bbE\bigg[\sup_{h \in \calH} \frac{1}{n}\bigg\|\sum_{i=1}^n \eps_i \psi(\big|h(\bx_i)\big| \big)\bigg\|_{\infty} \bigg]  \label{anot1mod},
\end{align} 
Define
\begin{align}
\tilde{\psi}(x):=\psi\big(x \bone\{x> 0 \}\big) - \psi\big(-x  \bone\{x< 0 \} \big)\qquad \forall x \in \bbR \label{A1}. 
\end{align}  Then, it is easy to see that $\tilde{\psi}$ is an odd function. 

On the other hand, we  also have
\begin{align}
\tilde{\psi}(|x|)= \psi(|x|), \qquad \forall x \in \bbR,
\end{align} so
\begin{align}
\bbE\bigg[\sup_{h \in \calH} \frac{1}{n}\bigg\|\sum_{i=1}^n \eps_i \psi(\big|h(\bx_i)\big| \big)\bigg\|_{\infty} \bigg]=\bbE\bigg[\sup_{h \in \calH} \frac{1}{n}\bigg\|\sum_{i=1}^n \eps_i \tilde{\psi}(\big|h(\bx_i)\big| \big)\bigg\|_{\infty} \bigg] \label{amumod}. 
\end{align}

Furthermore, for all $x, y \in \bbR$ we have
\begin{align}
&\big|\tilde{\psi}(x)-\tilde{\psi}(y)\big|\nn\\
&\qquad \leq \big|\psi\big(x \bone\{x> 0 \}\big)-\psi\big(y \bone\{y> 0 \}\big)\big|+ \big|\psi\big(x \bone\{x< 0 \}\big)-\psi\big(y \bone\{y< 0 \}\big)\big|\\
&\qquad \leq \mu \big|x \bone\{x> 0 \}-y \bone\{y > 0 \}\big|+ \mu \big|x \bone\{x < 0 \}-y \bone\{y < 0 \}\big| \label{Y1mod}
\end{align}
Now, observe that
\begin{align}
&\big|x \bone\{x>0 \}-y \bone\{y > 0 \}\big|\nn\\
&\qquad =  \bigg|\frac{x+|x|}{2}-\frac{ y+|y_|}{2}\big|\\
&\qquad \leq \frac{1}{2}  |x-y|+ \frac{1}{2} \sum_{i=1}^L ||x|-|y||\\
&\qquad \leq \big|x-y\big| \label{Y2mod}
\end{align}
Similarly, we also have
\begin{align}
\big|x \bone\{x <0 \}-y \bone\{y < 0 \}\big| \leq  |x-y| \label{Y3mod}.
\end{align}

From \eqref{Y1mod}, \eqref{Y2mod}, and \eqref{Y3mod} we obtain
\begin{align}
\big|\tilde{\psi}(x)-\tilde{\psi}(y)\big| \leq 2 \mu \big|x-y\big|, \qquad \forall x, y \in \bbR.
\end{align}

Hence, by Lemma \ref{lem:exttalaaex} we have
\begin{align}
&\bbE\bigg[\sup_{h \in \calH} \frac{1}{n}\bigg\|\sum_{i=1}^n \eps_i \tilde{\psi}(\big|h(\bx_i)\big| \big)\bigg\|_{\infty} \bigg]\nn\\
&\qquad \leq 2\mu \bbE\bigg[\sup_{h \in \calH_+} \frac{1}{n}\bigg\|\sum_{i=1}^n \eps_i \big|h(\bx_i)\big| \bigg\|_{\infty} \bigg]\\
&\qquad\leq  2\mu \bbE\bigg[\sup_{h \in \calH_+} \frac{1}{n}\bigg\|\sum_{i=1}^n \eps_i h(\bx_i) \bigg\|_{\infty} \bigg]
\label{amuta},
\end{align} where \eqref{amuta} follows by using the fact that $|h|\in \calH$ if $h \in \calH_+$. 

Hence, finally we have
\begin{align}
\bbE\bigg[\sup_{h \in \calH} \frac{1}{n}\bigg\|\sum_{i=1}^n \eps_i \psi(h(\bx_i))\bigg\|_{\infty} \bigg] \leq 2\mu \bbE\bigg[\sup_{h \in \calH_+} \frac{1}{n}\bigg\|\sum_{i=1}^n \eps_i h(\bx_i) \bigg\|_{\infty} \bigg] \label{amau}. 
\end{align}

\section{Proof of Lemma \ref{lem:exttalab1}}
For any general function $\psi$, we can represent as
\begin{align}
\psi(x)= \frac{\psi(x)+ \psi(-x)}{2}+\frac{\psi(x)- \psi(-x)}{2}, \qquad \forall \bx \in \bbR \label{amomod}. 
\end{align}
It is easy to see that $ \frac{\psi(x)+ \psi(-x)}{2}$ is an even function with $\mu$-Lipschitz. Besides, $\frac{\psi(x)- \psi(-x)}{2}$ is an odd function with $\mu$-Lischitz. Hence, by using triangle inequality, Lemma \ref{lem:exttalaa} and Lemma  \ref{lem:exttalaa1}, we have
\begin{align}
\bbE\bigg[\sup_{h \in \calH} \frac{1}{n}\bigg\|\sum_{i=1}^n \eps_i \psi(h(\bx_i))\bigg\|_{\infty} \bigg]\leq (2\mu+ \mu)  \bbE\bigg[ \sup_{h \in \calH_+} \frac{1}{n} \bigg\| \sum_{i=1}^n \eps_i h(\bx_i)\bigg\|_{\infty}\bigg]. 
\end{align} 

\section{Proof of Theorem \ref{lem:linear}} \label{proof:lem:linear}

For any $\bW \in \calV$, observe that
\begin{align}
\bigg\|\frac{1}{n} \sum_{i=1}^n \eps_i \bW f(\bx_i) \bigg\|_{\infty} 
& = \bigg\|\bW\bigg(\frac{1}{n} \sum_{i=1}^n \eps_i f(\bx_i)\bigg)\bigg\|_{\infty} \label{B0c}\\
&\leq \big\|\bW\big\|_{\infty}  \bigg\|\frac{1}{n} \sum_{i=1}^n \eps_i f(\bx_i)\bigg\|_{\infty} \\
& \leq \nu \bigg\|\frac{1}{n} \sum_{i=1}^n \eps_i f(\bx_i)\bigg\|_{\infty} \label{alo1}. 
\end{align}  
	
Hence, \eqref{alo2} is a direct application of this fact.

This concludes our proof of Theorem \ref{lem:linear}.

\section{Proof of Lemma \ref{goodlem0inftyto1}} \label{goodlem0inftyto1:proof}
Let
\begin{align}
\bone_{\tau_l^2} &=\underbrace{\begin{bmatrix} 1&1& \cdots &1\end{bmatrix}}_{\tau_l^2 },\\
0_{\tau_l^2}&=\underbrace{\begin{bmatrix} 0&0& \cdots &0\end{bmatrix}}_{\tau_l^2 },
\end{align}
and
\begin{align}
\bA=\frac{1}{\tau_l^2} \begin{bmatrix}\bone_{\tau_l^2}& 0_{\tau_l^2}&0_{\tau_l^2}& \cdots& 0_{\tau_l^2}& 0_{\tau_l^2} \\ 0_{\tau_l^2}& \bone_{\tau_l^2}&0_{\tau_l^2}& \cdots&0_{\tau_l^2}& 0_{\tau_l^2}\\ \vdots & \vdots &\vdots & \ddots& \vdots &\vdots\\ 0_{\tau_l^2}& 0_{\tau_l^2}&0_{\tau_l^2}& \cdots& 0_{\tau_l^2}& \bone_{\tau_l^2}     \end{bmatrix} \in \bbR^{\lceil(d-r_l+1)^2/\tau_l^2\rceil \tau_l^2 \times \lceil(d-r_l+1)^2/\tau_l^2\rceil \tau_l^2 }.
\end{align}
Then, for all $\bx \in \bbR^{d\times d \times C}$ and $l \in [Q], c \in [C]$, we have 
\begin{align}
\psi_{l,c}(\bx)=\sigma_{\rm{avg}}\circ \sigma_{l,c}(\bx),
\end{align}
where
\begin{align}
\sigma_{\rm{avg}}(\bx)=\bA \bx, \qquad \forall \bx \in \bbR^{\lceil(d-r_l+1)^2/\tau_l^2\rceil \tau_l^2 }. 
\end{align}
Now, for all  $\bx, \by \in \bbR^{ \lceil (d-r_l+1)^2/\tau_l^2\rceil \tau_l^2}$  we have
\begin{align}
&\big\|\sigma_{\rm{avg}}(\bx)- \sigma_{\rm{avg}}(\by)\big\|_{\infty} \nn\\
&\qquad \leq  \frac{1}{\tau_l^2} \max_{j\in [\lceil (d-r_l+1)^2/\tau_l^2\rceil]}  \sum_{k=(j-1)\tau_l^2+1 }^{j\tau_l^2} \big|x_k-y_k\big|\\
&\qquad \leq \big\|\bx-\by\big\|_{\infty}. 
\end{align}
Hence, we have
\begin{align}
\|\bA\|_{\infty} \leq 1 \label{mott1}. 
\end{align}
Hence, by Lemma \ref{lem:linear} we have
\begin{align}
&\bbE\bigg[ \sup_{c \in [C]}\sup_{l \in [Q]}\sup_{\psi_{l,c} \in \Psi} \sup_{f \in \calF} \bigg\|\frac{1}{n}\sum_{i=1}^n \eps_i \psi_{l,c} \circ f(\bx_i)\bigg\|_{\infty}   \bigg]\nn\\
&\qquad =\bbE\bigg[ \sup_{c \in [C]}\sup_{l \in [Q]}\sup_{\sigma_{\rm{avg}}} \sup_{\sigma_{l,c}} \sup_{f \in \calF} \bigg\|\frac{1}{n}\sum_{i=1}^n \eps_i \sigma_{\rm{avg}} \circ \sigma_{l,c}\circ  f(\bx_i)\bigg\|_{\infty} \bigg]\\
&\qquad \leq \bbE\bigg[  \sup_{c \in [C]} \sup_{l \in [Q]} \sup_{\sigma_{l,c}} \sup_{f \in \calF} \bigg\|\frac{1}{n}\sum_{i=1}^n \eps_i \sigma_{l,c}\circ f(\bx_i)\bigg\|_{\infty} \bigg] \label{lem1}. 
\end{align}
In addition, for all $\bx \in \bbR^{d\times d \times C}$, 
\begin{align}
\sigma_{l,c}(\bx)&=\{\hat{x}_c(a,b)\}_{a,b=1}^{d-r_l+1},\\
\hat{x}_c(a,b)&=\sigma\bigg(\sum_{u=0}^{r_l-1}\sum_{v=0}^{r_l-1} x(a+u,b+v,c) W_{l,c}(u+1,v+1)\bigg). 
\end{align}
Hence, we have
\begin{align}
&\big\|\sigma_{l,c}(\bx)-\sigma_{l,c}(\by)\big\|_{\infty} \nn\\
&\qquad \leq \mu \max_{a \in [d-r_l+1]}\max_{b\in [d-r_l+1]} \sum_{u=0}^{r_l-1}\sum_{v=0}^{r_l-1}  \big|W_{l,c}(u+1,v+1)x(a+u,b+v,c)\nn\\
&\qquad \qquad \qquad -W_{l,c}(u+1,v+1)y(a+u,b+v,c)\big|\\
&\qquad \leq \mu  \sum_{u=0}^{r_l-1}\sum_{v=0}^{r_l-1}  \big|W_{l,c}(u+1,v+1)  \big| \|\bx-\by\|_{\infty}  \label{A2B}.
\end{align}
Since the convolution is linear, it is also easy to see that $\sigma_{l,c}$ is the composition of a linear map and a point-wise activation map.  
Hence, by Lemma \ref{lem:linear}  and Theorem \ref{main:thm1} we have
\begin{align}
& \bbE\bigg[ \sup_{c \in [C]} \sup_{l \in [Q]} \sup_{\sigma_{l,c}} \sup_{f \in \calF} \bigg\|\frac{1}{n}\sum_{i=1}^n \eps_i \sigma_{l,c} \circ f(\bx_i)\bigg\|_{\infty} \bigg] \nn\\
&\quad \leq \bigg[\gamma(\mu) \sup_{c \in [C]} \sup_{l \in [Q]} \bigg(\sum_{u=0}^{r_l-1}\sum_{v=0}^{r_l-1}  \big|W_{l,c} (u+1,v+1)  \big|\bigg) \bigg] \bbE\bigg[ \sup_{f \in \calF_+} \bigg\|\frac{1}{n}\sum_{i=1}^n \eps_i f(\bx_i)\bigg\|_{\infty} \bigg] + \frac{|\sigma(0)|}{\sqrt{n}}\label{lem2}.
\end{align}
Finally, from \eqref{lem1} and \eqref{lem2} we obtain
\begin{align}
&\bbE\bigg[ \sup_{c \in [C]} \sup_{l \in [Q]}\sup_{\psi_{l,c} \in \Psi} \sup_{f \in \calF} \bigg\|\frac{1}{n}\sum_{i=1}^n \eps_i \psi_{l,c} \circ f(\bx_i)\bigg\|_{\infty}   \bigg]\nn\\
&\quad \leq \bigg[\gamma(\mu) \sup_{c \in [C]} \sup_{l \in [Q]} \bigg(\sum_{u=0}^{r_l-1}\sum_{v=0}^{r_l-1}  \big|W_{l,c}(u+1,v+1)  \big|\bigg) \bigg]  \bbE\bigg[ \sup_{f \in \calF_+} \bigg\|\frac{1}{n}\sum_{i=1}^n \eps_i f(\bx_i)\bigg\|_{\infty} \bigg]+ \frac{|\sigma(0)|}{\sqrt{n}}. 
\end{align}

\section{Proof of Lemma \ref{dropoutlem}}\label{proof:dropoutlem}
This is a direct result of Lemma \ref{lem:exttalaa}, where $\tilde{\psi}_j(\bx)=x_j$ or $0$ at each fixed $j$. Hence, we have
\begin{align}
\big|\tilde{\psi}_j(\bx)-\tilde{\psi}_j(\by)\big|\leq |x_j-y_j| 
\end{align}  for all vectors $\bx$ and $\by$.

\section{Proof of Lemma \ref{goodlem}} \label{proof:goodlem}
This is a direct result of Theorem \ref{main:thm1} and Lemma \ref{lem:linear}.
\section{Proof of Lemma \ref{lem:abound}}\label{proof:lem:abound}
For $M>2$,  \eqref{kbound} is a result of \citep[Proof of Theorem 11]{Koltchinskii2002}. Now, we prove  \eqref{kbound} for $M=2$. Observe that
\begin{align}
&\bbE_{\boldsymbol{\eps}}\bigg[\sup_{f \in \calF} \bigg|\frac{1}{n}\sum_{i=1}^n \eps_i m_f(\bx_i,y_i)\bigg|\bigg]\nn\\
&\qquad =\bbE_{\boldsymbol{\eps}}\bigg[\sup_{f \in \calF} \bigg|\frac{1}{n}\sum_{i=1}^n \eps_i \bigg( [f(\bx_i)]_{y_i}-\sup_{y'\neq y_i} [f(\bx_i)]_{y'} \bigg)\bigg|\bigg]\\
&\qquad \leq \bbE_{\boldsymbol{\eps}}\bigg[\sup_{f \in \calF} \bigg|\frac{1}{n}\sum_{i=1}^n \eps_i [f(\bx_i)]_{y_i}\bigg|\bigg]+ \bbE_{\boldsymbol{\eps}}\bigg[\sup_{f \in \calF} \bigg|\frac{1}{n}\sum_{i=1}^n \eps_i \sup_{y'\neq y_i} [f(\bx_i)]_{y'} \bigg|\bigg] \label{C1}. 
\end{align}
Now, we have
\begin{align}
&\bbE_{\boldsymbol{\eps}}\bigg[\sup_{f \in \calF} \bigg|\frac{1}{n}\sum_{i=1}^n \eps_i [f(\bx_i)]_{y_i}\bigg|\bigg]\nn\\
&\qquad = \bbE_{\boldsymbol{\eps}}\bigg[\sup_{f \in \calF} \bigg|\frac{1}{n}\sum_{i=1}^n \eps_i [f(\bx_i)]_{y_i}\sum_{y=1}^M \bone_{\{y_i=y\}} \bigg|\bigg]\\
&\qquad = \bbE_{\boldsymbol{\eps}}\bigg[\sup_{f \in \calF} \bigg|\frac{1}{n}\sum_{y=1}^M\sum_{i=1}^n \eps_i [f(\bx_i)]_{y} \bone_{\{y_i=y\}} \bigg|\bigg]\\
&\qquad \leq \sum_{y=1}^M \bbE_{\boldsymbol{\eps}}\bigg[\sup_{f \in \calF} \bigg|\frac{1}{n}\sum_{i=1}^n \eps_i [f(\bx_i)]_{y} \bone_{\{y_i=y\}} \bigg|\bigg]\\
&\qquad \leq \frac{1}{2} \sum_{y=1}^M \bbE_{\boldsymbol{\eps}}\bigg[\sup_{f \in \calF} \bigg|\frac{1}{n}\sum_{i=1}^n \eps_i [f(\bx_i)]_{y} (2 \bone_{\{y_i=y\}} -1)\bigg|\bigg]\nn\\
&\qquad \qquad +  \frac{1}{2} \sum_{y=1}^M \bbE_{\boldsymbol{\eps}}\bigg[\sup_{f \in \calF} \bigg|\frac{1}{n}\sum_{i=1}^n \eps_i [f(\bx_i)]_{y} \bigg|\bigg]\\
&\qquad = \frac{1}{2} \sum_{y=1}^M \bbE_{\boldsymbol{\eps}}\bigg[\sup_{f \in \calF} \bigg|\frac{1}{n}\sum_{i=1}^n \eps_i [f(\bx_i)]_{y} \bigg|\bigg]\nn\\
&\qquad \qquad +  \frac{1}{2} \sum_{y=1}^M \bbE_{\boldsymbol{\eps}}\bigg[\sup_{f \in \calF} \bigg|\frac{1}{n}\sum_{i=1}^n \eps_i [f(\bx_i)]_{y} \bigg|\bigg] \label{kpar1}\\
&\qquad=  \sum_{y=1}^M \bbE_{\boldsymbol{\eps}}\bigg[\sup_{f \in \calF} \bigg|\frac{1}{n}\sum_{i=1}^n \eps_i [f(\bx_i)]_{y} \bigg|\bigg] \label{kpar2a}\\
&\qquad\leq  \sum_{y=1}^M \bbE_{\boldsymbol{\eps}}\bigg[\sup_{f \in \calF} \bigg\|\frac{1}{n}\sum_{i=1}^n \eps_i f(\bx_i) \bigg\|_{\infty} \bigg]\\
&\qquad= M \bbE_{\boldsymbol{\eps}}\bigg[\sup_{f \in \calF} \bigg\|\frac{1}{n}\sum_{i=1}^n \eps_i f(\bx_i) \bigg\|_{\infty} \bigg]
 \label{kpar2},
\end{align} where \eqref{kpar1} follows from the fact that $(2 \bone_{\{y_1=y\}}-1) \eps_1, (2 \bone_{\{y_2=y\}}-1) \eps_2, \cdots, (2 \bone_{\{y_n=y\}}-1) \eps_n)$ has the same distribution as $(\eps_1,\eps_2,\cdots,\eps_n)$.

On the other hand, we also have
\begin{align}
&\bbE_{\boldsymbol{\eps}}\bigg[\sup_{f \in \calF} \bigg|\frac{1}{n}\sum_{i=1}^n \eps_i \sup_{y'\neq y_i} [f(\bx_i)]_{y'} \bigg|\bigg] \nn\\
&\qquad = \bbE_{\boldsymbol{\eps}}\bigg[\sup_{f \in \calF} \bigg|\frac{1}{n}\sum_{i=1}^n \eps_i \sup_{y'\neq y_i} [f(\bx_i)]_{y'}\sum_{y=1}^M \bone_{\{y_i=y\}} \bigg|\bigg]\\
&\qquad = \bbE_{\boldsymbol{\eps}}\bigg[\sup_{f \in \calF} \bigg|\frac{1}{n}\sum_{y=1}^M\sum_{i=1}^n \eps_i \sup_{y'\neq y} [f(\bx_i)]_{y'} \bone_{\{y_i=y\}} \bigg|\bigg]\\
&\qquad \leq \sum_{y=1}^M  \bbE_{\boldsymbol{\eps}}\bigg[\sup_{f \in \calF} \bigg|\frac{1}{n}\sum_{i=1}^n \eps_i \sup_{y'\neq y} [f(\bx_i)]_{y'} \bone_{\{y_i=y\}} \bigg|\bigg]\\
&\qquad \leq \frac{1}{2} \sum_{y=1}^M  \bbE_{\boldsymbol{\eps}}\bigg[\sup_{f \in \calF} \bigg|\frac{1}{n}\sum_{i=1}^n \eps_i \sup_{y'\neq y} [f(\bx_i)]_{y'} (2\bone_{\{y_i=y\}}-1) \bigg|\bigg]\nn\\
&\qquad \qquad + \frac{1}{2} \sum_{y=1}^M  \bbE_{\boldsymbol{\eps}}\bigg[\sup_{f \in \calF} \bigg|\frac{1}{n}\sum_{i=1}^n \eps_i \sup_{y'\neq y} [f(\bx_i)]_{y'}  \bigg|\bigg]\\
&\qquad= \frac{1}{2} \sum_{y=1}^M  \bbE_{\boldsymbol{\eps}}\bigg[\sup_{f \in \calF} \bigg|\frac{1}{n}\sum_{i=1}^n \eps_i \sup_{y'\neq y} [f(\bx_i)]_{y'}  \bigg|\bigg]\nn\\
&\qquad \qquad + \frac{1}{2} \sum_{y=1}^M  \bbE_{\boldsymbol{\eps}}\bigg[\sup_{f \in \calF} \bigg|\frac{1}{n}\sum_{i=1}^n \eps_i \sup_{y'\neq y} [f(\bx_i)]_{y'}  \bigg|\bigg] \label{kpar3}\\
&\qquad= \sum_{y=1}^M  \bbE_{\boldsymbol{\eps}}\bigg[\sup_{f \in \calF} \bigg|\frac{1}{n}\sum_{i=1}^n \eps_i \sup_{y'\neq y} [f(\bx_i)]_{y'}  \bigg|\bigg] \label{kpar4},
\end{align} where \eqref{kpar3} follows from the fact that $(2 \bone_{\{y_1=y\}}-1) \eps_1, (2 \bone_{\{y_2=y\}}-1) \eps_2, \cdots, (2 \bone_{\{y_n=y\}}-1) \eps_n)$ has the same distribution as $(\eps_1,\eps_2,\cdots,\eps_n)$.

Now, for each fixed $y \in [M]$ and $M=2$, let $\hat{y}=[M]\setminus \{y\}$ we have
\begin{align}
&\bbE_{\boldsymbol{\eps}}\bigg[\sup_{f \in \calF} \bigg|\frac{1}{n}\sum_{i=1}^n \eps_i \sup_{y'\neq y} [f(\bx_i)]_{y'}  \bigg|\bigg]\nn\\
&\qquad= \bbE_{\boldsymbol{\eps}}\bigg[\sup_{f \in \calF} \bigg|\frac{1}{n}\sum_{i=1}^n \eps_i [f(\bx_i)]_{\hat{y}} \bigg|\bigg]\\
&\qquad \leq \bbE_{\boldsymbol{\eps}}\bigg[\sup_{f \in \calF} \bigg\|\frac{1}{n}\sum_{i=1}^n \eps_i f(\bx_i)  \bigg\|_{\infty} \bigg]\label{par6}.
\end{align} 

It follows from  \eqref{kpar4} and \eqref{par6} that
\begin{align}
&\bbE_{\boldsymbol{\eps}}\bigg[\sup_{f \in \calF} \bigg|\frac{1}{n}\sum_{i=1}^n \eps_i \sup_{y'\neq y_i} [f(\bx_i)]_{y'} \bigg|\bigg] \nn\\
&\qquad \leq M  \bbE_{\boldsymbol{\eps}}\bigg[\sup_{f \in \calF} \bigg\|\frac{1}{n}\sum_{i=1}^n \eps_i f(\bx_i)  \bigg\|_{\infty} \bigg]\label{par7}. 
\end{align}

From\eqref{C1}, \eqref{kpar2}, and \eqref{par7}, for $M=2$ we have
\begin{align}
\bbE_{\boldsymbol{\eps}}\bigg[\sup_{f \in \calF} \bigg|\frac{1}{n}\sum_{i=1}^n \eps_i m_f(\bx_i,y_i)\bigg|\bigg] \leq 2M \bbE_{\boldsymbol{\eps}}\bigg[\sup_{f \in \calF} \bigg|\frac{1}{n}\sum_{i=1}^n \eps_i f(\bx_i)\bigg|\bigg]. 
\end{align}

 \section{Proof of Theorem \ref{main}}\label{main:proof}	

Let $(\bx_1',y_1'),(\bx_2',y_2'),\cdots, (\bx_n',y_n')$ is an i.i.d. sequence with distribution $P_{XY}$ which is independent of $X^nY^n$. Define
	\begin{align}
	E(f):= \bbE_{\bX'\bY'}\bigg[\frac{1}{n}\sum_{i=1}^n \zeta(m_f(\bx_i',y_i'))\bigg] \label{defEf}.
	\end{align}	
	Now, let $D=\{(\bx_i,y_i): i \in [n]\}$, and let $\tilD=\{(\bx_i,y_i): i \in [n]\}$ be a set with only one sample different from $D$, i.e. the $k$-th sample is replaced by 
	$(\tilde{\bx}_k,\tilde{y}_k)$. Define
	\begin{align}
	\hatE_D(f):= \frac{1}{n}\sum_{i=1}^n \zeta(m_f(\bx_i,y_i)) \label{defEDf}
	\end{align}
	and
	\begin{align}
	\Phi(D):=\sup_{f \in \calF} E(f)-\hatE_D(f),
	\end{align} which is a function of $n$ independent random vectors $(\bx_1,y_1),(\bx_2,y_2),\cdots,(\bx_n,y_n)$ where $(\bx_i,y_i) \sim P_{XY}$ for all $i \in [n]$.  Since $0\leq \zeta(x)\leq 1$ for all $x \in \bbR$, from \eqref{defEf} and \eqref{defEDf} we have
    \begin{align}
    \big|\Phi(\tilD)-\Phi(D)\big|&\leq \sup_{f \in\ \calF} \frac{|\zeta(m_f(\bx_k,y_k))- \zeta(m_f(\tilde{\bx}_k,\tilde{y}_k))|}{n}\\
    &\leq \frac{1}{n} \label{mato1}.
    \end{align}
By McDiarmid's inequality \cite{RagSason}, with probability at least $1-\exp(-2t^2)$ we have
	\begin{align}
	&\sup_{f \in \calF}\bigg(\frac{1}{n}\bbE_{\bX'\bY'}\bigg[\sum_{i=1}^n \zeta(m_f(\bx_i',y_i'))\bigg]-\frac{1}{n}\sum_{i=1}^n \zeta(m_f(\bx_i,y_i))\bigg)\nn\\
	&\qquad \leq \bbE_{\bX\bY}\bigg[ \sup_{f \in \calF}\bigg(\bbE_{\bX'\bY'}\bigg[\frac{1}{n}\sum_{i=1}^n \zeta(m_f(\bx_i',y_i'))\bigg]-\frac{1}{n}\sum_{i=1}^n \zeta(m_f(\bx_i,y_i))\bigg) \bigg]+ \frac{t}{\sqrt{n}}  \label{ab3}.
	\end{align}
	
	Now, let $\bar{\zeta}(x):=\zeta(x)-\zeta(0)$, which is a $1/\gamma$-Lipschitz function with $\bar{\zeta}(0)=0$. 
	Then, we have
	\begin{align}
	& \bbE_{\bX \bY}\bigg[\sup_{f \in \calF}\bigg(\bbE_{\bX' \bY'}\bigg[\frac{1}{n}\sum_{i=1}^n \zeta(m_f(\bx_i',y_i'))\bigg]-\frac{1}{n}\sum_{i=1}^n \zeta(m_f(\bx_i,y_i))\bigg)\bigg]\\
	&\qquad\leq \bbE_{\bX \bY} \bigg[ \sup_{f \in \calF}\bigg|\bbE_{\bX'\bY'}\bigg[\frac{1}{n}\sum_{i=1}^n \bar{\zeta}(m_f(\bx_i',y_i'))\bigg]-\frac{1}{n}\sum_{i=1}^n \bar{\zeta}(m_f(\bx_i,y_i))\bigg| \bigg]\\
	&\qquad=\bbE_{\bX \bY} \bigg[ \sup_{f \in \calF}\bigg|\bbE_{\bX'\bY'}\bigg[\frac{1}{n}\sum_{i=1}^n \big(\bar{\zeta}(m_f(\bx_i',y_i'))- \bar{\zeta}(m_f(\bx_i,y_i))\big)\bigg]\bigg|\bigg]\\
	&\qquad\leq\bbE_{\bX\bY}\bigg[\bbE_{\bX'\bY'}\bigg[ \sup_{f \in \calF}\bigg|\frac{1}{n}\sum_{i=1}^n \big(\bar{\zeta}(m_f(\bx_i',y_i'))- \bar{\zeta}(m_f(\bx_i,y_i))\big)\bigg|\bigg] \bigg] \label{mo4} \\
	&\qquad\leq \frac{1}{\gamma} \bbE_{\bX\bY}\bigg[\bbE_{\bX'\bY'}\bigg[ \sup_{f \in \calF}\bigg|\frac{1}{n}\sum_{i=1}^n \big(m_f(\bx_i',y_i')- m_f(\bx_i,y_i)\big)\bigg|\bigg] \bigg] \label{mo4b} \\
	&\qquad=\frac{1}{\gamma} \bbE_{\boldsymbol{\eps}}\bigg[\bbE_{\bX\bY}\bigg[\bbE_{\bX'\bY'}\bigg[ \sup_{f \in \calF}\bigg|\frac{1}{n}\sum_{i=1}^n \eps_i\big(m_f(\bx_i',y_i')- m_f(\bx_i,y_i)\big)\bigg|\bigg] \bigg]\label{mode}\\
	&\qquad\leq  \frac{1}{\gamma} \bbE_{\boldsymbol{\eps}}\bigg[\bbE_{\bX'\bY'}\bigg[ \sup_{f \in \calF}\bigg|\frac{1}{n}\sum_{i=1}^n \eps_i m_f(\bx_i',y_i')\bigg|\bigg]\bigg]\nn\\
	&\qquad \qquad +\frac{1}{\gamma} \bbE_{\boldsymbol{\eps}}\bigg[\bbE_{\bX \bY}\bigg[ \sup_{f \in \calF}\bigg|\frac{1}{n}\sum_{i=1}^n \eps_i m_f(\bx_i,y_i)\bigg|\bigg]\bigg] \label{mot2}\\
	&\qquad= \frac{2}{\gamma} \bbE_{\boldsymbol{\eps}}\bigg[\bbE_{\bX\bY}\bigg[ \sup_{f \in \calF}\bigg|\frac{1}{n}\sum_{i=1}^n \eps_i m_f(\bx_i,y_i)\bigg|\bigg]\bigg] \label{mot3}\\
	&\qquad =\frac{2}{\gamma} \bbE_{\bX\bY}\bigg[ \bbE_{\boldsymbol{\eps}}\bigg[\sup_{f \in \calF}\bigg|\frac{1}{n}\sum_{i=1}^n \eps_i m_f(\bx_i,y_i)\bigg|\bigg]\bigg] \label{mot4}\\
	&\qquad \leq \frac{2\beta(M)}{\gamma} \bbE_{\bX\bY}\bigg[ \bbE_{\boldsymbol{\eps}}\bigg[\sup_{f \in \calF}\bigg\|\frac{1}{n}\sum_{i=1}^n \eps_i f(\bx_i)\bigg\|_{\infty} \bigg]\bigg] \label{mot4b}
	\end{align} where \eqref{mode} follows from \cite[Lemma 25]{Truong2022DL}, 
	 and \eqref{mot4b} follows from Lemma \ref{lem:abound}. 
	
	From \eqref{mot4b}, with probability at least $1- \exp(-2t^2)$ we have
	\begin{align}
	& \sup_{f \in \calF}\bigg(\bbE\bigg[\frac{1}{n}\sum_{i=1}^n \zeta(m_f(\bx_i',y_i'))\bigg]-\frac{1}{n}\sum_{i=1}^n \zeta(m_f(\bx_i,y_i))\bigg)\\
	&\qquad \qquad \leq  \frac{2\beta(M)}{\gamma}  \bbE\bigg[\sup_{f \in \calF}\bigg\|\frac{1}{n}\sum_{i=1}^n \eps_i f(\bx_i)\bigg\|_{\infty} \bigg]+ \frac{t}{\sqrt{n}} \label{g10}.
	\end{align}
	
	It follows that, with probability at least $1-\exp(-2t^2)$,
	\begin{align}
	&\bbE_{\bX',\bY'}\bigg[\frac{1}{n}\sum_{i=1}^n \zeta(m_f(\bx_i',y_i'))\bigg]\leq \frac{1}{n}\sum_{i=1}^n \zeta(m_f(\bx_i,y_i))\nn\\
	&\qquad +   \frac{2\beta(M)}{\gamma} \bbE\bigg[\sup_{f \in \calF}\bigg\|\frac{1}{n}\sum_{i=1}^n \eps_i f (\bx_i)\bigg\|_{\infty} \bigg]+ \frac{t}{\sqrt{n}} \qquad \forall f\in \calF  \label{ab13},
	\end{align} 
	or
	\begin{align}
	&\bbE[\zeta(m_f(\bx,y))] \leq \frac{1}{n}\sum_{i=1}^n \zeta(m_f(\bx_i,y_i))\nn\\
	&\qquad \qquad +   \frac{2\beta(M)}{\gamma} \bbE\bigg[\sup_{f \in \calF}\bigg\|\frac{1}{n}\sum_{i=1}^n \eps_i f (\bx_i)\bigg\|_{\infty} \bigg] + \frac{t}{\sqrt{n}} \qquad \forall f\in \calF  \label{ab14}.
	\end{align}
	Now, observe that
	\begin{align}
	&\bbE[\zeta(m_f(\bx,y))]\nn\\
	&\qquad =  \bbP\big[m_f(\bx,y)\leq 0 \big]+\bbE[\zeta(m_f(\bx,y))|0\leq m_f(\bx,y)\leq \gamma] \bbP[0\leq m_f(\bx,y)\leq \gamma]\\
	&\qquad \geq \bbP\big(m_f(\bx,y)\leq 0 \big) \label{mota2}.
	\end{align}
	From \eqref{ab14} and \eqref{mota2}, with probability at least $1-\exp(-2t^2)$, 
	\begin{align}
	&\bbP\big[m_f(\bx,y) \leq 0 \big]\leq \frac{1}{n}\sum_{i=1}^n \zeta(m_f(\bx_i,y_i))\nn\\
	&\qquad \qquad +   \frac{2\beta(M)}{\gamma}  \bbE\bigg[\sup_{f \in \calF}\bigg\|\frac{1}{n}\sum_{i=1}^n \eps_i f (\bx_i)\bigg\|_{\infty} \bigg]+ \frac{t}{\sqrt{n}} \qquad \forall f\in \calF \label{mottam1}. 
	\end{align}
Now, let $\gamma_k=2^{-k}$ for all $k \in \bbN$. For any $\gamma \in (0,1]$, there exists a $k \in \bbN$ such that $\gamma \in (\gamma_k,\gamma_{k-1}]$. Then, by applying \eqref{mottam1} with $t$ being replaced by $t+\sqrt{\log k}$ and $\zeta(\cdot)=\zeta_k(\cdot)$ where
\begin{align}
\zeta_k(x):=\begin{cases}0,& \gamma_k \leq x \\ 1-\frac{x}{\gamma_k} &0\leq x \leq \gamma_k\\1,&x \leq 0 \end{cases},
\end{align}
with probability at least $1-\exp(-2(t+\sqrt{\log k})^2)$, we have
\begin{align}
&\bbP\big[m_f(\bx,y) \leq 0 \big]\leq \frac{1}{n}\sum_{i=1}^n \zeta_k(m_f(\bx_i,y_i)) \nn\\
&\qquad \qquad  +   \frac{2\beta(M)}{\gamma}\bbE\bigg[\sup_{f \in \calF}\bigg\|\frac{1}{n}\sum_{i=1}^n \eps_i f (\bx_i)\bigg\|_{\infty} \bigg] +  \frac{t+\sqrt{\log k}}{\sqrt{n}}, \quad \forall f \in \calF \label{mottam1b}. 
\end{align}
By using the union bound, from \eqref{mottam1b}, with probability at least $1-\sum_{k\geq 1} \exp(-2(t+\sqrt{\log k})^2)$, it holds that
\begin{align}
&\bbP\big[m_f(\bx,y) \leq 0 \big]\leq \inf_{k\geq 1} \bigg[\frac{1}{n}\sum_{i=1}^n \zeta_k(m_f(\bx_i,y_i))\nn\\
&\qquad \qquad +   \frac{2\beta(M)}{\gamma} \bbE\bigg[\sup_{f \in \calF}\bigg\|\frac{1}{n}\sum_{i=1}^n \eps_i f(\bx_i)\bigg\|_{\infty} \bigg]+  \frac{t+\sqrt{\log k}}{\sqrt{n}}\bigg], \quad \forall f \in \calF \label{mottam1d}. 
\end{align}
On the other hand, it is easy to see that
\begin{align}
\frac{1}{\gamma_k} &\leq \frac{2}{\gamma} \label{x1},\\
\frac{1}{n}\sum_{i=1}^n \zeta_k(m_f(\bx_i,y_i))&\leq \frac{1}{n}\sum_{i=1}^n \zeta(m_f(\bx_i,y_i)) \label{x2},\\
\sqrt{\log k}&\leq \sqrt{\log \log_2 \frac{1}{\gamma_k}}\leq \sqrt{\log \log_2 \frac{2}{\gamma}} \label{x3},\\
\sum_{k\geq 1} \exp(-2(t+\sqrt{\log k})^2)&\leq \sum_{k\geq 1} k^2 e^{-2t^2}=\frac{\pi^2}{6} e^{-2t^2}\leq 2 e^{-2t^2} \label{x4}.
\end{align}
Hence, by combining \eqref{x1}--\eqref{x4}, and \eqref{mottam1d}, with probability at least $1-2\exp(-2t^2)$, it holds that
\begin{align}
&\bbP\big[m_f(\bx,y) \leq 0 \big]\leq \inf_{\gamma \in (0,1]}\bigg[\frac{1}{n}\sum_{i=1}^n \zeta(m_f(\bx_i,y_i))\nn\\
& \qquad +   \frac{2\beta(M)}{\gamma}\bbE\bigg[\sup_{f \in \calF}\bigg\|\frac{1}{n}\sum_{i=1}^n \eps_i f (\bx_i)\bigg\|_{\infty} \bigg] +  \frac{t+\sqrt{\log \log_2 (2 \gamma^{-1})}}{\sqrt{n}}\bigg], \forall f \in \calF \label{mottam1c}. 
\end{align}
From \eqref{mottam1c} we have
\begin{align}
&\bbP\big[m_f(\bx,y) \leq 0 \big]\leq \inf_{\gamma \in (0,1]}\bigg[\frac{1}{n}\sum_{i=1}^n \zeta\big(m_f(\bx_i,y_i)\big)\nn\\
&\qquad +     \frac{2\beta(M)}{\gamma}\bbE\bigg[\sup_{f \in \calF}\bigg\|\frac{1}{n}\sum_{i=1}^n \eps_i f (\bx_i)\bigg\|_{\infty} \bigg]  +  \frac{t+\sqrt{\log \log_2 (2 \gamma^{-1})}}{\sqrt{n}}\bigg], \quad \forall f \in \calF \label{mottam1f}.  
\end{align} 

This concludes our proof of Theorem \ref{main}. 
\section{Proof of Lemma \ref{lem:new}}\label{lem:new:proof}
First, we have
\begin{align}
&\frac{1}{n}\bbE_{\boldsymbol{\eps}}\bigg[\sup_{h \in \calH}\bigg\|\sum_{i=1}^n \eps_i \tilde{\psi}(h(\bx_i),y_i)\bigg\|_{\infty}  \bigg]\nn\\
&\qquad \leq \frac{1}{n}\bigg(\bbE_{\boldsymbol{\eps}}\bigg[\sup_{h \in \calH}\bigg\|\sum_{i=1}^n \eps_i \bigg(\tilde{\psi}(h(\bx_i),y_i )-\tilde{\psi}(\b0,y_i )\bigg)\bigg\|_{\infty} \bigg] + \bbE_{\boldsymbol{\eps}}\bigg[\sup_{h \in \calH} \bigg\| \sum_{i=1}^n \eps_i \tilde{\psi}(\b0,y_i ) \bigg\|_{\infty} \bigg]\bigg)\label{M1}\\
&\qquad \leq \frac{1}{n}\bigg(\bbE_{\boldsymbol{\eps}}\bigg[\sup_{h \in \calH}\bigg\|\sum_{i=1}^n \eps_i \bigg(\tilde{\psi}(h(\bx_i),y_i )-\tilde{\psi}(\b0,y_i)\bigg)\bigg\|_{\infty}  \bigg]+ \bbE_{\boldsymbol{\eps}}\bigg[\bigg|\sum_{i=1}^n \eps_i \tilde{\psi}(0,y_i) \bigg|\bigg]\bigg)\label{M1b}\\
&\qquad \leq \frac{1}{n}\bigg(\bbE_{\boldsymbol{\eps}}\bigg[\sup_{h \in \calH}\bigg\|\sum_{i=1}^n \eps_i \bigg(\tilde{\psi}(h(\bx_i),y_i )-\tilde{\psi}(\b0,y_i)\bigg)\bigg\|_{\infty}  \bigg]+ \sqrt{\bbE_{\boldsymbol{\eps}}\bigg[\bigg( \sum_{i=1}^n \eps_i \tilde{\psi}(0,y_i) \bigg)^2}\bigg]\bigg)\label{M1c}\\
&\qquad\leq \frac{1}{n}\bbE_{\boldsymbol{\eps}}\bigg[\sup_{h \in \calH}\bigg\|\sum_{i=1}^n \eps_i \bigg(\tilde{\psi} (h(\bx_i))-\tilde{\psi}(0,y_i)\bigg)\bigg\|_{\infty} \bigg]+ \sup_{y \in \calY} \big|\tilde{\psi}(0,y)\big|\frac{1}{\sqrt{n}} \label{M2},
\end{align} where \eqref{M1} follows from the triangular property of the $\infty$-norm \cite{Royden}, \eqref{M1b} follows from the element-wise mapping of $\tilde{\psi}$, and \eqref{M1c} follows Cauchy-Schwarz inequality.  

For any $\tilde{\psi}\in \Psi_{\mu}$, define $\psi(x,y):=\tilde{\psi}(x,y)-\tilde{\psi}(0,y)$ for all $x \in \bbR$ and  $y \in \calY$. Then, we have $\psi(0,y)=0$ for all $y \in \calY$, and $\psi$ is also $\mu$-Lipschitz in $\bx$ with respect to the $\infty$-norm for each fixed $y \in \calY$. 

Now, observe that
\begin{align}
&\frac{1}{n}\bbE_{\boldsymbol{\eps}}\bigg[\sup_{h \in \calH}\bigg\|\sum_{i=1}^n \eps_i \psi \big(h(\bx_i),y_i\big) \bigg\|_{\infty} \bigg]\\
&\qquad = \frac{1}{n}\bbE_{\boldsymbol{\eps}}\bigg[ \sup_{j \in [m]} \sup_{h \in \calH}\bigg|\sum_{i=1}^n \eps_i \big[\psi \big(h(\bx_i),y_i\big)\big]_j\bigg| \bigg]\\
&\qquad \leq  \frac{1}{n}\bbE_{\boldsymbol{\eps}}\bigg[\sup_{j \in [m]}  \sup_{h \in \calH} \sum_{i=1}^n \eps_i \big[\psi \big(h(\bx_i),y_i\big)\big]_j \bone\bigg\{  \sum_{i=1}^n \eps_i \big[\psi \big(h(\bx_i),y_i\big) \big]_j \geq 0 \bigg\}\bigg] \nn\\
&\qquad + \frac{1}{n}\bbE_{\boldsymbol{\eps}}\bigg[\sup_{j \in [m]}  \sup_{h \in \calH} \sum_{i=1}^n -\eps_i [\psi \big(h(\bx_i),y_i\big)]_j \bone\bigg\{   \sum_{i=1}^n -\eps_i [\psi \big(h(\bx_i),y_i\big)]_j  >0 \bigg\}\bigg] \\
&\qquad = \frac{2}{n}\bbE_{\boldsymbol{\eps}}\bigg[\sup_{j \in [m]}  \sup_{h \in \calH} \sum_{i=1}^n \eps_i [\psi \big(h(\bx_i),y_i\big)]_j  \bone\bigg\{   \sum_{i=1}^n \eps_i \big[\psi \big(h(\bx_i),y_i\big)]_j \geq 0 \bigg\}\bigg]  \label{espa}
\end{align} where \eqref{espa} follows from the fact that if $(\eps_1,\eps_2, \cdots, \eps_n)$ is a sequence of i.i.d. Rademacher random variables, $(- \eps_1,- \eps_2, \cdots,- \eps_n)$  
is also a sequence of i.i.d. Rademacher random variables. 

Now, for each fixed $j$ and arbitrarily small $\delta_n>0$, assume that 
\begin{align*}
&\sup_{h \in \calH} \sum_{i=1}^n \eps_i  [\psi \big(h(\bx_i),y_i\big)]_j  \bone\bigg\{   \sum_{i=1}^n \eps_i [\psi \big(h(\bx_i),y_i\big)]_j  \geq 0 \bigg\} \nn\\
&\qquad = \sum_{i=1}^n \eps_i [\psi \big(h_1(\bx_i),y_i\big)]_j  \bone\bigg\{   \sum_{i=1}^n \eps_i [\psi \big(h_1(\bx_i),y_i\big)]_j  \geq 0 \bigg\}+\delta_n
\end{align*}
 for some $h_1 \in \calH$. Consider two cases:
\begin{itemize}
\item Case 1: $\sum_{i=1}^n \eps_i [\psi \big(h_1(\bx_i),y_i\big)]_j  \geq 0$. Then, we have
\begin{align}
&\sup_{h \in \calH} \sum_{i=1}^n \eps_i [\psi \big(h(\bx_i),y_i\big)]_j  \bone\bigg\{   \sum_{i=1}^n \eps_i [\psi \big(h(\bx_i),y_i\big)]_j \geq 0 \bigg\}\nn\\
&\qquad \leq \sum_{i=1}^n \eps_i [\psi \big(h_1(\bx_i),y_i\big)]_j  \\
& \qquad \leq \sup_{h \in \calH_+} \sum_{i=1}^n \eps_i [\psi \big(h(\bx_i),y_i\big)]_j \label{case1}. 
\end{align}
\item Case 2: $\sum_{i=1}^n \eps_i [\psi \big(h_1(\bx_i),y_i\big)]_j <0$.  Then, we have
\begin{align}
&\sup_{h \in \calH} \sum_{i=1}^n \eps_i [\psi \big(h(\bx_i),y_i\big)]_j \bone\bigg\{   \sum_{i=1}^n \eps_i [\psi \big(h(\bx_i),y_i\big)]_j  \geq 0 \bigg\}\nn\\
&\qquad = 0 \\
& \qquad \leq \sup_{h \in \calH_+} \sum_{i=1}^n \eps_i [\psi \big(h(\bx_i),y_i\big)]_j  \label{last}
\end{align} where \eqref{last} follows from the fact that $\sup_{h \in \calH_+} \sum_{i=1}^n \eps_i [\psi \big(h(\bx_i),y_i\big)]_j  \geq 0$ since $\b0 \in \calH_+$ and $\psi(0,y)=0$ for all $y \in \calY$. 
\end{itemize} 
From \eqref{case1} and \eqref{last}, we have
\begin{align}
\frac{1}{n}\bbE_{\boldsymbol{\eps}}\bigg[\sup_{h \in \calH_+}\bigg\|\sum_{i=1}^n \eps_i \psi \big(h(\bx_i),y_i\big)  \bigg\|_{\infty} \bigg]  \leq \frac{2}{n} \bbE_{\boldsymbol{\eps}}\bigg[\sup_{j \in [m]} \sup_{h \in \calH_+} \sum_{i=1}^n \eps_i [\psi \big(h(\bx_i),y_i\big)]_j  \bigg] \label{akey}. 
\end{align}
Now, observe that
\begin{align}
&\frac{1}{n}\bbE_{\boldsymbol{\eps}}\bigg[ \sup_{j \in [m]} \sup_{h \in \calH_+} \sum_{i=1}^n \eps_i [\psi \big(h(\bx_i),y_i\big)]_j \bigg]\nn\\
&\qquad =\frac{1}{n}\bbE_{\eps_1,\eps_2,\cdots,\eps_{n-1}}\bigg[\bbE_{\eps_n}\bigg[\sup_{j \in [m]} \sup_{h \in \calH_+}u_{n-1}(h,j)+ \eps_n [\psi \big(h(\bx_n),y_n\big)]_j  \bigg]\bigg],
\end{align}
where
\begin{align}
u_{n-1}(h,j):=\sum_{i=1}^{n-1} \eps_i [\psi \big(h(\bx_i),y_i\big)]_j \label{eq116}.
\end{align}
On the other hand, by the definition of supreme, for any $\delta_n>0$ and fixed $j \in [k]$, there exist $h_1,h_2,h_3,h_4 \in \calH_+$ satisfying
\begin{align}
\sup_{h \in \calH_+} u_{n-1}(h,j)+[\psi \big(h(\bx_n),y_n\big)]_j &\leq u_{n-1}(h_1,j)+ [\psi \big(h_1(\bx_n),y_n\big)]_j +\delta_n,\\
\sup_{h \in \calH_+} u_{n-1}(h,j)-[\psi \big(h(\bx_n),y_n\big)]_j &\leq u_{n-1}(h_2,j)- [\psi \big(h_2(\bx_n),y_n\big)]_j +\delta_n\\
\sup_{h \in \calH_+} -u_{n-1}(h,j)-[\psi(h(\bx_n,y_n))]_j&\leq -u_{n-1}(h_3,j)- [\psi \big(h_3(\bx_n),y_n\big)]_j +\delta_n,\\
\sup_{h \in \calH_+} -u_{n-1}(h,j)+[\psi(h(\bx_n,y_n))]_j&\leq -u_{n-1}(h_4,j)+ [\psi \big(h_4(\bx_n),y_n\big)]_j +\delta_n.
\end{align}

Since $\eps_n$ is uniformly distributed over $\{-1,1\}$, we have
\begin{align}
&\bbE_{\eps_n}\bigg[\sup_{j \in [m]} \sup_{h \in \calH_+}u_{n-1}(h,j)+ \eps_n [\psi \big(h(\bx_n),y_n\big)]_j   \bigg]\nn\\
&\qquad =\frac{1}{2}\sup_{j \in [m]} \sup_{h \in \calH_+}u_{n-1}(h,j)
+  [(\psi ( h(\bx_n),y_n)]_j   \nn\\
&\qquad \qquad  + \frac{1}{2}\sup_{j \in [m]} \sup_{h \in \calH_+}u_{n-1}(h,j)- [\psi( h(\bx_n),y_n)]_j \label{c1}.
\end{align} 
Now, since $\b0 \in \calH_+$ (the zero-function) and $\psi(0,y)=0$ for all $y \in \calY$, it holds that
\begin{align}
\sup_{h \in \calH_+}u_{n-1}(h,j)+  \psi ([h(\bx_n)]_j,y_n)\geq u_{n-1}(\b0,j)+  \psi(0,y_n)=0.
\end{align}
Similarly, we also have
\begin{align}
\sup_{h \in \calH_+} -u_{n-1}(h,j)-   [\psi( h(\bx_n),y_n)]_j & \geq 0,\\
\sup_{h \in \calH_+}u_{n-1}(h,j)-  [\psi( h(\bx_n),y_n)]_j &\geq 0,\\
\sup_{h \in \calH_+} -u_{n-1}(h,j)+  [\psi( h(\bx_n),y_n)]_j &\geq 0. 
\end{align}
Hence, we have
\begin{align}
&\frac{1}{2} \sup_{j\in [m]} \sup_{h \in \calH_+}u_{n-1}(h,j)+   [\psi( h(\bx_n),y_n)]_j  + \frac{1}{2} \sup_{j \in [m]} \sup_{h \in \calH_+}u_{n-1}(h,j)- [\psi( h(\bx_n),y_n)]_j  \nn\\
&\qquad \leq 2\sup_{j \in [m]} \bigg[\frac{1}{2}\bigg(\sup_{h \in \calH_+}u_{n-1}(h,j)+   [\psi( h(\bx_n),y_n)]_j \bigg) \nn\\
&\qquad \qquad \qquad + \frac{1}{2}\bigg(\sup_{h \in \calH_+}  u_{n-1}(h,j)-   [\psi( h(\bx_n),y_n)]_j \bigg) \bigg] \label{M4}.
\end{align}
Note that for $m=1$, we have
\begin{align}
&\frac{1}{2} \sup_{j\in [m]} \sup_{h \in \calH_+}u_{n-1}(h,j)+   [\psi( h(\bx_n),y_n)]_j + \frac{1}{2} \sup_{j \in [m]} \sup_{h \in \calH_+}u_{n-1}(h,j)-  [\psi( h(\bx_n),y_n)]_j  \nn\\
&\qquad =\sup_{j \in [m]}\bigg[\frac{1}{2}\bigg(\sup_{h \in \calH_+}u_{n-1}(h,j)+   [\psi( h(\bx_n),y_n)]_j \bigg) \nn\\
&\qquad \qquad \qquad + \frac{1}{2}\bigg(\sup_{h \in \calH_+}  u_{n-1}(h,j)-   [\psi( h(\bx_n),y_n)]_j \bigg)\bigg]\label{M5}. 
\end{align}

On the other hand, for each fixed $j \in [m]$, we also have
\begin{align}
&\frac{1}{2}\bigg(\sup_{h \in \calH_+}u_{n-1}(h,j)+   [\psi( h(\bx_n),y_n)]_j  \bigg)+ \frac{1}{2}\bigg(\sup_{h \in \calH_+}u_{n-1}(h,j)-   [\psi( h(\bx_n),y_n)]_j  \bigg)\nn\\
&\leq \frac{1}{2}\bigg(u_{n-1}(h_1,j)+   [\psi( h_1(\bx_n),y_n)]_j \bigg)+ \frac{1}{2}\bigg(u_{n-1}(h_2,j)- [\psi( h_2(\bx_n),y_n)]_j  \bigg)+2 \delta_n\\
&=\frac{1}{2}\bigg\{u_{n-1}(h_1,j)+  u_{n-1}(h_2,j)+   \bigg( [\psi( h_1(\bx_n),y_n)]_j - [\psi( h_2(\bx_n),y_n)]_j  \bigg)\bigg\}+2\delta_n\\
& \leq \frac{1}{2}\bigg[u_{n-1}(h_1,j)+u_{n-1}(h_2,j)\bigg]+ \frac{1}{2} \mu \big|[h_1(\bx_n)]_j-[h_2(\bx_n)]_j\big|\bigg]
+2\delta_n \label{eq152p} \\
& \leq \frac{1}{2}\bigg[u_{n-1}(h_1,j)+u_{n-1}(h_2,j)\bigg]+ \frac{1}{2} \mu s_{12} \big([h_1(\bx_n)]_j-[h_2(\bx_n)]_j\big)
+2\delta_n  \label{eq144p},
\end{align} where \eqref{eq152p} follows from the $\mu$-Lipschitz of the function $\psi$ with respect to $x$ for each fixed $y$. Here, $s_{12}:= \sgn( [h_1(\bx_n)]_j-[h_2(\bx_n)]_j)$.  

Now, for a given $s_{12} \in \{-1,+1\}$, we have
\begin{align}
&\frac{1}{2}\bigg[u_{n-1}(h_1,j)+u_{n-1}(h_2,j)\bigg]+\frac{1}{2}\mu  s_{12} \big([h_1(\bx_n)]_j- [h_2(\bx_n)]_j\big)\nn\\
&\qquad= \frac{1}{2}\bigg[u_{n-1}(h_1,j)+ s_{12}\mu  [h_1(\bx_n)]_j\bigg]+ \frac{1}{2}\bigg[ u_{n-1}(h_2,j)-\mu s_{12}  [h_2(\bx_n)]_j \bigg]\\
&\qquad \leq \frac{1}{2}\sup_{h \in \calH_+} \bigg[u_{n-1}(h,j)+ s_{12} \mu   [h(\bx_n)]_j\bigg]+ \frac{1}{2}\sup_{h \in \calH_+}\bigg[ u_{n-1}(h,j)-\mu s_{12}  [h(\bx_n)]_j\bigg]\\
&\qquad =\bbE_{\eps_n} \bigg[\sup_{h \in \calH_+} u_{n-1}(h,j)+ \eps_n s_{12} \mu  [h(\bx_n)]_j \bigg]\\
&\qquad= \bbE_{\eps_n} \bigg[\sup_{h \in \calH_+} u_{n-1}(h,j)+ \eps_n\mu [h(\bx_n)]_j\bigg] \label{lady1c}
\end{align} where \eqref{lady1c} follows from $s_{12} \eps_n \sim \eps_n$ for all fixed $s_{12} \in \{-1,+1\}$.

From  \eqref{M4}, \eqref{M5}, \eqref{eq144p}, and  \eqref{lady1c}, we obtain
\begin{align}
&\bbE_{\eps_n}\bigg[\sup_{j \in [m]} \sup_{h \in \calH_+}u_{n-1}(h,j)+ \eps_n [\psi( h(\bx_n),y_n)]_j  \bigg]\nn\\
&\qquad \leq (2\{m>1\}+1\{m=1\}) \bbE_{\eps_n}\bigg[\sup_{j \in [m]} \sup_{h \in \calH_+} u_{n-1}(h,j)+ \eps_n\mu [h(\bx_n)]_j\bigg] \label{mut2p}.
\end{align}  

By combining \eqref{M2} and \eqref{mut2p} and the fact that $\delta_n$ can be arbitrarily small, we have
\begin{align}
&\frac{1}{n}\bbE_{\boldsymbol{\eps}}\bigg[\sup_{h \in \calH_+}\bigg\|\sum_{i=1}^n \eps_i  \psi( h(\bx_i),y_i) \bigg\|_{\infty} \bigg]\nn\\
&\qquad \qquad \leq  2(2\{m>1\}+1\{m=1\})\frac{1}{n}\bbE_{\eps}\bigg[\sup_{j \in [m]} \sup_{h \in \calH_+} u_{n-1}(h,j)+ \eps_n\mu [h(\bx_n)]_j\bigg]\nn\\
&\qquad \qquad \qquad +\sup_{y \in \calY} \big|\tilde{\psi}(0,y)\big|\frac{1}{\sqrt{n}} \label{M10p}.
\end{align}

By continuing this process  (peeling) for $n-1$ more times, we have
\begin{align}
&\bbE_{\boldsymbol{\eps}}\bigg[\sup_{j \in [m]} \sup_{h \in \calH_+} u_{n-1}(h,j)+ \eps_n\mu [h(\bx_n)]_j\bigg]\nn\\
&\qquad \leq 2(2\{m>1\}+1\{m=1\})\mu \bbE_{\boldsymbol{\eps}}\bigg[\sup_{j \in [m]} \sup_{h \in \calH_+}\sum_{i=1}^n \eps_i [h(\bx_n)]_j\bigg]\\
&\qquad \leq 2(2\{m>1\}+1\{m=1\})\mu \bbE_{\boldsymbol{\eps}}\bigg[ \sup_{h \in \calH_+}\sup_{j \in [m]}\bigg|\sum_{i=1}^n \eps_i [h(\bx_n)]_j\bigg|\bigg] \\
&\qquad= 2(2\{m>1\}+1\{m=1\})\mu \bbE_{\boldsymbol{\eps}}\bigg[ \sup_{h \in \calH_+}\bigg\|\sum_{i=1}^n \eps_i h(\bx_n)\bigg\|_{\infty}\bigg] \label{pkey1p}. 
\end{align}

From \eqref{M10p} and \eqref{pkey1p}, we obtain
\begin{align}
&\frac{1}{n}\bbE_{\boldsymbol{\eps}}\bigg[\sup_{h \in \calH_+}\bigg\|\sum_{i=1}^n \eps_i  \psi( h(\bx_i),y_i)  \bigg\|_{\infty} \bigg]\nn\\
&\qquad \leq \frac{2\mu}{n} (2\{m>1\}+1\{m=1\}) \bbE_{\boldsymbol{\eps}}\bigg[ \sup_{h \in \calH_+}\bigg\|\sum_{i=1}^n \eps_i  \psi( h(\bx_i),y_i) \bigg\|_{\infty}\bigg] \nn\\
&\qquad \qquad +\sup_{y \in \calY} \big|\tilde{\psi}(0,y)\big|\frac{1}{\sqrt{n}} \label{pkey3p}.
\end{align}

This concludes our Lemma \ref{lem:new}.

\section{Proof of Lemma \ref{mato1a}} \label{mato1a:proof}
Assume that $\psi(\bx)=(\psi_1(\bx),\psi_2(\bx),\cdots, \psi_K(\bx))$ for all $\bx \in \bbR^L$. Then, we have
\begin{align}
\bigg\|\sum_{i=1}^n \eps_i \psi(h(\bx_i))\bigg\|_1 =\sum_{k=1}^K \bigg|\sum_{i=1}^n \eps_i \psi_k(h(\bx_i))\bigg|  \label{AQ3}. 
\end{align}
Now, given each fixed  $\bold{\eps}=(\eps_1,\eps_2,\cdots,\eps_n) \in \bbR^n$, let $h_{\eps}\in \calH$ such that $\sum_{k=1}^K \big|\sum_{i=1}^n \psi_k(\eps_i  h(\bx_i)) \big| \leq \sum_{k=1}^K \big|\sum_{i=1}^n \psi_k(\eps_i  h_{\bold{\eps}}(\bx_i))) \big|+\delta_n$ for any $\delta_n>0$. Then, we have
\begin{align}
&\sum_{k=1}^K \bigg|\sum_{i=1}^n \psi_k(\eps_i  h_{\eps} (\bx_i)) \bigg|\nn\\
 &\qquad \leq \sup_{\nu_1,\nu_2,\cdots, \nu_K \in \{-1,+1\}} \sum_{k=1}^K\nu_k \bigg(\sum_{i=1}^n \eps_i \psi_k( h_{\eps} (\bx_i))\bigg)\\
 &\qquad = \sup_{\nu_1,\nu_2,\cdots, \nu_K \in \{-1,+1\}} \sum_{k=1}^K \bigg(\sum_{i=1}^n \eps_i \psi_k(\nu_k h_{\eps} (\bx_i))\bigg) \label{amet0}\\
 &\qquad \leq  \sup_{\nu_1,\nu_2,\cdots, \nu_K \in \{-1,+1\}} \sum_{k=1}^K \bigg(\sum_{i=1}^n \eps_i \psi_k^{(\nu)} (h_{\eps} (\bx_i))\bigg) \label{amet2},
\end{align}  where \eqref{amet0} follows from the assumption that $\psi$ is an odd function,  and \eqref{amet2} follows by defining  $\psi^{(\nu)}(\bx)=(\psi_1(\nu_1 \bx),\psi_2(\nu_2 \bx), \cdots, \psi_K(\nu_K \bx)$ for any $\nu \in \{-1,+1\}^K$. 

Now, define
\begin{align}
\Psi=\big\{\psi: \bbR^L \to \bbR^K, \mbox{st.}\enspace \|\psi(\bx)-\psi(\by)\|_1 \leq \mu \|\bx-\by\|_1, \psi(-\bx)=-\psi(\bx), \quad \forall \bx, \by \in \bbR^L \big\}.
\end{align}
Then, it is easy to see that $\psi^{(\nu)} \in \Psi$ for all $\nu \in \{-1,+1\}^K$ since $\psi \in \Psi$.

Hence, from \eqref{AQ3} and \eqref{amet2} we have
\begin{align}
\sup_{h \in \calH} \bigg\|\sum_{i=1}^n \eps_i \psi(h(\bx_i))\bigg\|_1 \leq \sup_{h \in \calH} \sup_{\psi \in \Psi} \sum_{k=1}^K \bigg(\sum_{i=1}^n \eps_i \psi_k(h(\bx_i))\bigg) \label{amet}.
\end{align}

It follows that
\begin{align}
\bbE_{\bold{\eps}}\bigg[\sup_{h \in \calH} \bigg\|\sum_{i=1}^n \eps_i \psi(h(\bx_i))\bigg\|_1 \bigg]&\leq \bbE_{\bold{\eps}}\bigg[\sup_{h \in \calH} \sup_{\psi \in \Psi}\sum_{k=1}^K \sum_{i=1}^n \eps_i \psi_k(  h(\bx_i))\bigg] \label{mutat}. 
\end{align}
Now, define the following function $g: \bbR^K \to \bbR$  such that $g(x_1,x_2,\cdots,x_K)=\sum_{k=1}^K x_k$. Then, we have
\begin{align}
\bbE_{\bold{\eps}}\bigg[\sup_{\psi \in \Psi} \sup_{h \in \calH} \bigg\|\sum_{i=1}^n \eps_i \psi(h(\bx_i))\bigg\|_1 \bigg]= \bbE_{\bold{\eps}}\bigg[\sup_{\psi \in \Psi} \sup_{h \in \calH}  \sum_{i=1}^n \eps_i g\circ \psi(h(\bx_i))\bigg]  \label{AQ7}. 
\end{align}
Let $\tilde{\psi}: \bbR^L \to \bbR$ such that $\tilde{\psi}(\bx)=g\circ \psi(\bx)=\sum_{k=1}^K \psi_k(\bx)$ for any $\bx \in \bbR^L$. Then, we have
\begin{align}
&\big|\tilde{\psi}(\bx)-\tilde{\psi}(\by)\big|\nn\\
&\qquad =\big| \bone^T (\psi (\bx)-\psi (\by))\big|\\
&\qquad \leq  \|\bone\|_{\infty} \|\psi(\bx)-\psi(\by)\|_1\\
&\qquad= \|\psi(\bx)-\psi(\by)\|_1\\
&\qquad \leq \mu \|\bx-\by\|_1 \label{eq37}
\end{align}  for all $\bx, \by \in \bbR^L$. 

Now, let 
\begin{align}
\tilde{\Psi}=\big\{\tilde{\psi}: \bbR^L \to \bbR \enspace \mbox{st.} \enspace |\tilde{\psi}(\bx)-\tilde{\psi}(\by)|\leq \mu \|\bx-\by\|_1, \tilde{\psi}(-\bx)=-\tilde{\psi}(\bx), \quad \forall \bx, \by \in \bbR^L \big\}.
\end{align}

Then,  from \eqref{AQ7} we have
\begin{align}
&\bbE_{\bold{\eps}}\bigg[\sup_{\psi \in \Psi} \sup_{h \in \calH} \bigg\|\sum_{i=1}^n \eps_i \psi(h(\bx_i))\bigg\|_1 \bigg]\nn\\
&\qquad \leq \bbE_{\bold{\eps}}\bigg[\sup_{\tilde{\psi}\in \tilde{\Psi}} \sup_{h \in \calH}  \sum_{i=1}^n \eps_i \tilde{\psi}(h(\bx_i))\bigg] \\
&= \bbE_{\eps_1,\eps_2,\cdots,\eps_{n-1}}\bigg[\sup_{\tilde{\psi}\in \tilde{\Psi}}\sup_{h \in \calH}  \sum_{i=1}^{n-1} \eps_i \tilde{\psi}(h(\bx_i))+ \frac{1}{2}\tilde{\psi}(h(\bx_n)) \nn\\
&\qquad +  \sup_{\tilde{\psi}\in \tilde{\Psi}}\sup_{h \in \calH}  \sum_{i=1}^{n-1} \eps_i \tilde{\psi}(h(\bx_i))- \frac{1}{2}\tilde{\psi}(h(\bx_n))\bigg] \label{B}\\
&= \bbE_{\eps_1,\eps_2,\cdots,\eps_{n-1}}\bigg[\sup_{\tilde{\psi}\in \tilde{\Psi}}\sup_{h \in \calH}  \sum_{i=1}^{n-1} \eps_i \tilde{\psi}(h(\bx_i))+ \frac{1}{2}\tilde{\psi}(h(\bx_n))\bigg] \nn\\
&\qquad +  \bbE_{\eps_1,\eps_2,\cdots,\eps_{n-1}}\bigg[\sup_{\tilde{\psi}\in \tilde{\Psi}}\sup_{h \in \calH}  \sum_{i=1}^{n-1} \eps_i \tilde{\psi}(h(\bx_i))- \frac{1}{2}\tilde{\psi}(h(\bx_n))\bigg] \label{B}\\
&= \bbE_{\eps_1,\eps_2,\cdots,\eps_{n-1}}\bigg[\sup_{\tilde{\psi}\in \tilde{\Psi}}\sup_{h \in \calH}  \sum_{i=1}^{n-1} \eps_i \tilde{\psi}(h(\bx_i))+ \frac{1}{2}\tilde{\psi}(h(\bx_n))\bigg] \nn\\
&\qquad +  \bbE_{\eps_1,\eps_2,\cdots,\eps_{n-1}}\bigg[\sup_{\tilde{\psi}\in \tilde{\Psi}}\sup_{h \in \calH} - \sum_{i=1}^{n-1} \eps_i \tilde{\psi}(h(\bx_i))- \frac{1}{2}\tilde{\psi}(h(\bx_n))\bigg]  \label{C}\\
&= \bbE_{\eps_1,\eps_2,\cdots,\eps_{n-1}}\bigg[\sup_{\tilde{\psi}\in \tilde{\Psi}}\sup_{h \in \calH}  \sum_{i=1}^{n-1} \eps_i \tilde{\psi}(h(\bx_i))+ \frac{1}{2}\tilde{\psi}(h(\bx_n))\bigg] \nn\\
&\qquad +  \sup_{\tilde{\psi}\in \tilde{\Psi}}\sup_{h \in \calH} - \sum_{i=1}^{n-1} \eps_i \tilde{\psi}(h(\bx_i))- \frac{1}{2}\tilde{\psi}(h(\bx_n))\bigg]  \label{D},
\end{align} where \eqref{C} follows from the fact that $(-\eps_1,-\eps_2, \cdots, -\eps_{n-1})$ are Rademacher random variables if and only if $(\eps_1,\eps_2, \cdots, \eps_{n-1})$ are Rademacher random variables.

On the other hand, since $\psi$ is odd, $\tilde{\psi}$ is also odd. Then, for any $\tilde{\psi}\in \tilde{\Psi}$ we have
\begin{align}
&\sup_{h \in \calH}  \sum_{i=1}^{n-1} \eps_i \tilde{\psi}(h(\bx_i))+ \frac{1}{2}\tilde{\psi}(h(\bx_n))\nn\\
&\qquad =\sup_{h \in \calH} - \sum_{i=1}^{n-1} \eps_i \tilde{\psi}(-h(\bx_i))- \frac{1}{2}\tilde{\psi}(-h(\bx_n)) \label{k1}\\
&\qquad = \sup_{h \in \calH} - \sum_{i=1}^{n-1} \eps_i \tilde{\psi}(h(\bx_i))- \frac{1}{2}\tilde{\psi}(h(\bx_n)) \label{k2},
\end{align} where \eqref{k1} follows from the fact that $\tilde{\psi}$ is odd, and \eqref{k2} follows from the assumption that $-h \in \calH$ iff $h \in \calH$.

Now, define $u_{n-1}(h):=\sum_{i=1}^{n-1} \eps_i \tilde{\psi}(h(\bx_i))) $.  Then, from \eqref{k3} for each given tuple $(\eps_1,\eps_2,\cdots,\eps_{n-1})$  any $\delta_n>0$, there exist $h_1\in \calH$ and $h_2 \in \calH$ and $\tilde{\psi} \in \tilde{\Psi}$ such that
\begin{align}
&\sup_{\tilde{\psi}\in \tilde{\Psi}}\sup_{h \in \calH}  \sum_{i=1}^{n-1} \eps_i \tilde{\psi}(h(\bx_i))+ \frac{1}{2}\tilde{\psi}(h(\bx_n))+  \sup_{\tilde{\psi}\in \tilde{\Psi}}\sup_{h \in \calH}  -\sum_{i=1}^{n-1} \eps_i \tilde{\psi}(h(\bx_i))- \frac{1}{2}\tilde{\psi}(h(\bx_n)))\\
&\qquad= \sum_{i=1}^{n-1} \eps_i \tilde{\psi}(h_1(\bx_i))+ \frac{1}{2}\tilde{\psi}_1(h_1(\bx_n))- \sum_{i=1}^{n-1} \eps_i \tilde{\psi}(h_2(\bx_i))- \frac{1}{2}\tilde{\psi}(h_2(\bx_n))+\delta_n\\
&\qquad= \sum_{i=1}^{n-1} \eps_i \tilde{\psi}(h_1(\bx_i))- \sum_{i=1}^{n-1} \eps_i \tilde{\psi}(h_2(\bx_i))+\frac{1}{2}\tilde{\psi}(h_1(\bx_n))- \frac{1}{2}\tilde{\psi}(h_2(\bx_n))+\delta_n \label{AQ10}. 
\end{align}
Observe that
\begin{align}
\big|\tilde{\psi}(h_1(\bx_n))-\tilde{\psi}(h_2(\bx_n))\big| &\leq \mu \big\|h_1(\bx_n)-h_2(\bx_n)\big\|_1 \label{eq44} \\
&=\mu \sum_{k=1}^L \big|[h_1(\bx_n)]_k-[h_2(\bx_n)]_k\big|\\
&=\mu \sum_{k=1}^L s_{12,k} \big( [h_1(\bx_n)]_k- [h_2(\bx_n)]_k \big) \label{M2a}
\end{align} where \eqref{eq44} follows from \eqref{eq37} and $s_{12,k}:=\sgn \big( [h_1(\bx_n)]_k- [h_2(\bx_n)]_k \big)$ for all $k \in [L]$. 

From \eqref{AQ10} and \eqref{M2a},  we obtain
\begin{align}
&\sup_{h \in \calH}  \sum_{i=1}^{n-1} \eps_i \tilde{\psi}(h(\bx_i)))+ \frac{1}{2}\tilde{\psi}(h(\bx_n))+  \sup_{h \in \calH}  -\sum_{i=1}^{n-1} \eps_i \tilde{\psi}(h(\bx_i))- \frac{1}{2}\tilde{\psi}(h(\bx_n))\\
&\qquad \leq \sum_{i=1}^{n-1} \eps_i \tilde{\psi}(h_1(\bx_i))- \sum_{i=1}^{n-1} \eps_i \tilde{\psi}(h_2(\bx_i))+\frac{\mu}{2}\sum_{k=1}^L s_{12,k} \big( [h_1(\bx_n)]_k- [h_2(\bx_n)]_k \big) \label{T1}.
\end{align} 
Now, given any fixed sequence $\{s_{12,k}\}_{k=1}^L \in \{-1,+1\}^L$, we have
\begin{align}
&\sum_{i=1}^{n-1} \eps_i \tilde{\psi}(h_1(\bx_i))-  \sum_{i=1}^{n-1} \eps_i \tilde{\psi}(h_2(\bx_i))+\frac{\mu}{2}\sum_{k=1}^L s_{12,k} \big( [h_1(\bx_n)]_k- [h_2(\bx_n)]_k\big) \nn\\
&\qquad =\sum_{i=1}^{n-1} \eps_i \tilde{\psi}(h_1(\bx_i))+ \frac{1}{2}\sum_{k=1}^L s_{12,k}[h_1(\bx_n)]_k - \sum_{i=1}^{n-1} \eps_i \tilde{\psi}(h_2(\bx_i))-\frac{\mu}{2}\sum_{k=1}^L s_{12,k}[h_2(\bx_n)]_k\\
&\qquad \leq \sup_{\tilde{\psi} \in \tilde{\Psi}} \sup_{h \in \calH} \sum_{i=1}^{n-1} \eps_i \tilde{\psi}(h(\bx_i))+ \frac{\mu}{2}\sum_{k=1}^L s_{12,k}[h(\bx_n)]_k \nn\\
&\qquad \qquad +\sup_{\tilde{\psi} \in \tilde{\Psi}} \sup_{h \in \calH} -\sum_{i=1}^{n-1} \eps_i \tilde{\psi}(h(\bx_i))-\frac{\mu}{2}\sum_{k=1}^L s_{12,k}[h(\bx_n)]_k\\
&\qquad = \sup_{\tilde{\psi} \in \tilde{\Psi}} \sup_{h \in \calH} \sum_{i=1}^{n-1} \eps_i \tilde{\psi}(h(\bx_i))+ \frac{\mu}{2}\sum_{k=1}^L s_{12,k}[h(\bx_n)]_k \nn\\
&\qquad \qquad +\sup_{\tilde{\psi} \in \tilde{\Psi}} \sup_{h \in \calH} \sum_{i=1}^{n-1} \eps_i \tilde{\psi}(h(\bx_i))-\frac{\mu}{2}\sum_{k=1}^L s_{12,k}[h(\bx_n)]_k\label{akamo}\\
&\qquad = \bbE_{\tilde{\eps}_n} \bigg[\sup_{\tilde{\psi} \in \tilde{\Psi}}\sup_{h \in \calH} \sum_{i=1}^{n-1} \eps_i \tilde{\psi}(h(\bx_i)))+\mu \tilde{\eps}_n \sum_{k=1}^L s_{12,k}[h(\bx_n)]_k\bigg] \label{A10}\\
&\qquad = \bbE_{\tilde{\eps}_n} \bigg[\sup_{\tilde{\psi} \in \tilde{\Psi}}\sup_{h \in \calH} \sum_{i=1}^{n-1} \eps_i \tilde{\psi}(h(\bx_i)))+\mu \tilde{\eps}_n \sum_{k=1}^L [s_{12}\odot h(\bx_n)]_k\bigg] \label{A11}\\
&\qquad = \bbE_{\tilde{\eps}_n} \bigg[\sup_{\tilde{\psi} \in \tilde{\Psi}}\sup_{h \in \calH} \sum_{i=1}^{n-1} \eps_i \tilde{\psi}(s_{12}\odot s_{12} \odot h(\bx_i)))+\mu \tilde{\eps}_n \sum_{k=1}^L [s_{12}\odot h(\bx_n)]_k\bigg] \label{A12}\\
&\qquad = \bbE_{\tilde{\eps}_n} \bigg[\sup_{s_{12 \in \{-1,+1\}^L}} \sup_{\tilde{\psi} \in \tilde{\Psi}}\sup_{h \in \calH} \sum_{i=1}^{n-1} \eps_i s_{12}\odot \tilde{\psi}(s_{12} \odot h(\bx_i)))+\mu \tilde{\eps}_n \sum_{k=1}^L [s_{12}\odot h(\bx_n)]_k\bigg] \label{A13}\\
&\qquad = \bbE_{\tilde{\eps}_n} \bigg[\sup_{s_{12 \in \{-1,+1\}^L}}\sup_{\tilde{\psi} \in \tilde{\Psi}}\sup_{h \in \calH} \sum_{i=1}^{n-1} \eps_i s_{12}\odot \tilde{\psi}(h_{s_{12}}(\bx_i)))+\mu \tilde{\eps}_n \sum_{k=1}^L [h_{s_{12}}(\bx_n)]_k\bigg] \label{A13b}\\
&\qquad \leq \bbE_{\tilde{\eps}_n} \bigg[\sup_{s_{12 \in \{-1,+1\}^L}}\sup_{\tilde{\psi} \in \tilde{\Psi}}\sup_{h \in \calH_+} \sum_{i=1}^{n-1} \eps_i s_{12}\odot \tilde{\psi}(h(\bx_i)))+\mu \tilde{\eps}_n \sum_{k=1}^L [h(\bx_n)]_k\bigg] \label{A14}\\
&\qquad \leq \bbE_{\tilde{\eps}_n} \bigg[\sup_{\tilde{\psi} \in \tilde{\Psi}}\sup_{h \in \calH_+} \sum_{i=1}^{n-1} \eps_i \tilde{\psi}(h(\bx_i)))+\mu \tilde{\eps}_n \sum_{k=1}^L [h(\bx_n)]_k\bigg] \label{A15},
\end{align} where \eqref{akamo} follows from the fact that $-\tilde{\psi} \in \tilde{\Psi}$ if and only if $\tilde{\psi} \in \tilde{\Psi}$, 
$\tilde{\eps}_n$ is a Rademacher random variable which is independent of $(\eps_1,\eps_2,\cdots,\eps_{n-1})$ in \eqref{A10}, \eqref{A13} follows from the fact that $\tilde{\psi}$ is odd, \eqref{A14} follows from the assumption that $h_{\sigma} \in \calH_+$ for all $\sigma \in \{-1,+1\}^L$, \eqref{A15} follows from the fact that $\tilde{\psi}^{s_{12}}(\bx):=s_{12}\odot \tilde{\psi}(\bx) \in \tilde{\Psi}$ for each fixed $s_{12} \in \{-1,+1\}^L$.

Continue this peeling process $n-1$ times more, we finally have
\begin{align}
&\bbE_{\bold{\eps}}\bigg[\sup_{\psi \in \Psi} \sup_{h \in \calH} \bigg\|\sum_{i=1}^n \eps_i \psi(h(\bx_i))\bigg\|_1 \bigg]\nn\\
&\qquad \leq \mu \bbE_{\tilde{\eps}_1,\tilde{\eps}_2,\cdots,\tilde{\eps}_{n-1},\tilde{\eps}_n} \bigg[ \sup_{h \in \calH}  \sum_{i=1}^n \tilde{\eps}_i \sum_{k=1}^K[h(\bx_i)]_k \bigg]\\
&\qquad=\mu \bbE_{\eps}\bigg[  \sup_{h \in \calH} \sum_{i=1}^n \eps_i \sum_{k=1}^K[h(\bx_i)]_k \bigg]\\
&\qquad= \mu \bbE_{\eps}\bigg[  \sup_{h \in \calH} \sum_{k=1}^K\sum_{i=1}^n \eps_i [h(\bx_i)]_k \bigg]\\
&\qquad \leq \mu \bbE_{\eps}\bigg[  \sup_{h \in \calH} \bigg\|\sum_{i=1}^n \eps_i  h(\bx_i) \bigg\|_1 \bigg] \label{Amat}. 
\end{align}

This concludes our proof of Lemma \ref{mato1a}.
\end{document}